\def\eqref#1{equation~\ref{#1}}
\def\1{\bm{1}}
\DeclareMathAlphabet{\mathsfit}{\encodingdefault}{\sfdefault}{m}{sl}
\SetMathAlphabet{\mathsfit}{bold}{\encodingdefault}{\sfdefault}{bx}{n}
\newcommand{\R}{\mathbb{R}}
\DeclareMathOperator*{\argmin}{arg\,min}
\newtheorem{proposition}{Proposition}
\newtheorem{theorem}{Theorem}
\newtheorem{assumption}{Assumption}
\newtheorem{corollary}{Corollary}
\newtheorem{definition}{Definition}
\def\1{\bm{1}}
\newcommand{\setz}{Z}
\newcommand{\vecz}[1]{\text{\textbf{vec}}_{#1}(\setz)}
\newcommand{\setg}{\mathfrak{g}}
\newcommand{\vecg}{g}
\title{Object-centric architectures enable efficient causal representation learning}
\renewcommand\thanks[1]{%
  \protected@xdef\@thanks{\@thanks\protect\footnotetext[\the\c@footnote]{#1}}%
}
\author{Amin Mansouri\thanks{\textsuperscript{1} Mila -- Quebec AI Institute, \textsuperscript{2} Université de Montréal,\textsuperscript{3} Samsung - SAIT AI Lab, Montreal, \textsuperscript{4} CIFAR AI Chair. Correspondence to: \textless amin.mansouri@mila.quebec\textgreater}\textsuperscript{1,2}, Jason Hartford\textsuperscript{1,2}, Yan Zhang\textsuperscript{3}, Yoshua Bengio\textsuperscript{1,2,4}
}
\begin{document}

\maketitle

\begin{abstract}
  Causal representation learning has showed a variety
  of settings in which we can disentangle latent variables with identifiability guarantees
  (up to some reasonable equivalence class). Common to all of these approaches is the assumption that
  (1) the latent variables are represented as $d$-dimensional vectors, and (2) that the observations are the output
  of some injective generative function of these latent variables. While these assumptions appear
  benign, we show that when the
  observations are of multiple objects, the generative function is no longer injective and
  disentanglement fails in practice. We can address this failure by combining recent developments
  in object-centric learning and causal representation learning. By modifying the Slot Attention
  architecture \citep{Locatello2020}, we develop an object-centric architecture that leverages weak
  supervision from sparse perturbations to disentangle each object's properties. This approach is more data-efficient in the sense that it requires significantly fewer perturbations than
  a comparable approach that encodes to a Euclidean space and we show that this approach successfully
  disentangles the properties of a set of objects in a series of simple image-based disentanglement
  experiments.
\end{abstract}

\section{Introduction}

Consider the image in Figure \ref{fig:objects} (left). We can clearly see four different colored balls,
each at a different position. But asking, ``Which is the first shape? And which is the second?" does not
have a clear answer: the image just depicts an unordered set of objects. This observation seems trivial,
but it implies that there exist permutations of the objects which leave the image unchanged. For example,
we could swap the positions of the two blue balls without changing a single pixel in the image.

\begin{figure}[h]
  \centering
  \includegraphics[width=0.4\textwidth]{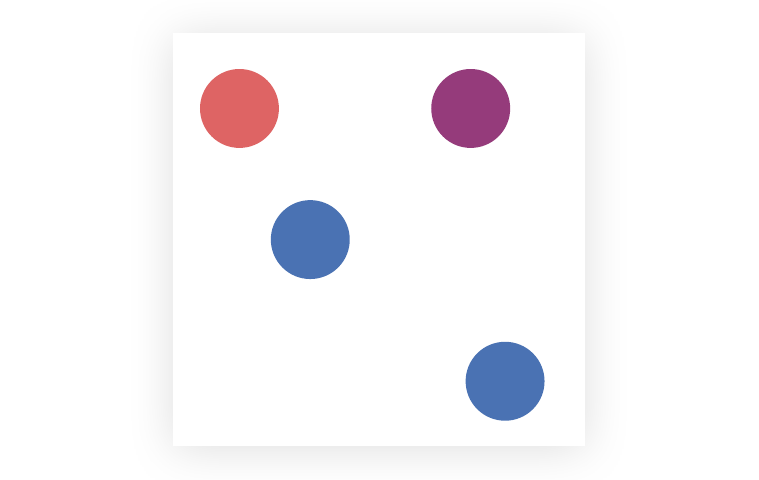}
  \includegraphics{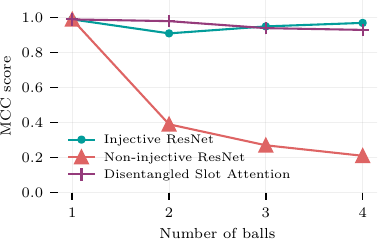}
  \caption{\emph{(Left)} An example image of simple objects. \emph{(Right)} Mean correlation coefficient (MCC) score which measures the correlation between inferred latent variables and their associated ground truth values. \citet{ahuja2022weak}'s approach achieves almost perfect MCC scores (i.e. a score $\approx 1$) when the ball color is used to make the generative function injective (``Injective ResNet''), but achieves an MCC score of at most $\frac{1}{k}$ where $k$ is the number of objects when colors are selected randomly (``Non-injective ResNet'').
    We show that it is possible to recover the injective performance by disentangling object-centric representations (``Disentangled Slot Attention''). 
  }
  \label{fig:objects}
\end{figure}

In causal representation learning, the standard assumption is that our observations $x$ are ``rendered''
by some generative function $\vecg(\cdot)$ that maps the latent properties of the image $z$ to pixel space
(i.e. $x= \vecg(z)$); the goal is to \emph{disentangle} the image by finding an ``inverse'' map that
recovers $z$ from $x$ up to some irrelevant transformation. The only constraint on $\vecg(\cdot)$ that is
assumed by all recent papers \citep[for example][]{hyvarinen2016unsupervised, hyvarinen2017nonlinear,
  locatello2020weakly, khemakhem2020variational, khemakhem2020ice,lachapelle2022disentanglement, ahuja2022properties,
  ahuja2022weak, ahuja2023interventional}, is that $\vecg(\cdot)$ is injective\footnote{Some papers place
  stronger constraints on $g(\cdot)$, such as linearity \citealp{ica, squires2023}, sparsity
  \citealp{moran2022,zheng2022}, or constraints on $g$'s Jacobian \citealp{gresele2021, brady2023provably} but
  injectivity is the weakest assumption common to all approaches.}, such that $\vecg(z_1)= \vecg(z_2)$
implies that $z_1 = z_2$. But notice that if we represent the latents $z$ as some $d$-dimensional vectors
in Euclidean space, then whenever we observe objects like those shown in Figure \ref{fig:objects}, this
injectivity assumption fails: symmetries in the objects' pixel representation imply that there exist
non-trivial permutation matrices $\Pi$, such that $\vecg(z) = \vecg(\Pi z)$. This is not just a theoretical
inconvenience: Figure \ref{fig:objects} (right) shows that when the identity of the balls is not distinguishable, the
disentanglement performance of a recent approach from \citet{ahuja2022weak} is upper-bounded by $1/k$
where $k$ is the number of balls. 

In parallel to this line of work, there has been significant progress in the object-centric learning literature \citep[e.g.][]{Steenkiste, goyal2019recurrent,Locatello2020,  goyal2020object, Lin2020, zhang2023unlocking} that has developed a suite of architectures that allow us to separate observations into sets of object representations. Two recent papers \citep{brady2023provably, lachapelle_additive_2023} showed that the additive decoders used in these architectures give rise to provable object-wise disentanglement, but they did not address the task of disentangling the objects' associated properties. In this paper we show that by leveraging object-centric architectures, \emph{we effectively reduce the multi-object problem to a set of single-object disentanglement problems} which not only addresses injectivity failures, but also results in a significant reduction in the number of perturbations we need to observe to disentangle properties using \citet{ahuja2022weak}'s approach.
We illustrate these results by developing a property disentanglement algorithm that combines \citet{zhang2023unlocking}'s SA-MESH object-centric architecture with \citet{ahuja2022weak}'s approach to disentanglement and show that our approach is very effective at disentangling the properties of objects on both 2D and 3D synthetic benchmarks. 

In summary, we make the following contributions:
\begin{itemize}[leftmargin=0.75cm]
    \item We highlight two problems that arise from objects which violate standard assumptions used to identify latent variables (Section \ref{sec:challenges}).
    \item We show that these problems can be addressed by leveraging object-centric architectures, and that using object-centric architectures also enables us to use a factor of $k$ fewer perturbations to disentangle properties, where $k$ is the number of objects (Section \ref{sec:methods}).
    \item We implement the first object-centric disentanglement approach that disentangles object properties with identifiability guarantees (Section \ref{sec:method}). 
    \item We achieve strong empirical results\footnote{The code to reproduce our results can be found at: \href{https://github.com/amansouri3476/OC-CRL}{https://github.com/amansouri3476/OC-CRL}} on both 2D and 3D synthetic benchmarks (Section \ref{sec:results}).
\end{itemize}

\section{Background}
\label{sec:background}

Causal representation learning \citep{scholkopf2021toward} seeks to reliably extract meaningful latent
variables from unstructured observations such as images. This problem is impossible without additional structure
because there are infinitely many latent distributions $p(z)$ that are consistent with the
observed distribution, $p(x) = \int p(x | z)d p(z)$, only one of which corresponds to the
ground truth distribution \citep{hyvarinen1999nonlinear,locatello2019challenging}. We therefore need to restrict the solution
space either through distributional assumptions on the form of the latent distribution $p(z)$, or through 
assumptions on the functional form of the generative function $g: \mathcal{Z}\rightarrow \mathcal{X}$ that maps
from the latent space to the observed space \citep{xi23a}. 
A key assumption that (to the best of our knowledge) is leveraged by all papers that provide identifiability guarantees, is that $g(\cdot)$ is injective such that if we see identical images, the latents are identical (i.e. if $g(z_1) = g(z_2)$ then $ z_1 = z_2$).

Given these restrictions, we can analyze the \emph{identifiability} of latent variables for a given inference algorithm by considering the set of optimal solutions that satisfy these assumptions. 
We say latent variables are \emph{identified} if the procedure will recover the latents exactly in the infinite data limit. 
Typically, some irreducible indeterminacy will remain, so latent variables will be identified \emph{up to} some equivalence 
class $\mathcal{A}$. For example, if the true latent vector is $z$,  and we have an algorithm 
for which all optimal solutions return a linear transformation of $z$ such that, $\mathcal{A} =\{A: \hat{z} = Az\}$, then we say the algorithm is linearly identifies latent variables. 
We will call latent variables \emph{disentangled} if the learning algorithm recovers the true latent variables up to a permutation
(corresponding to a relabeling of the original variables), and element-wise transformation. That is, for all $i$, $z_i = h_i(z_{\pi(i)})$, where $\pi$ is a permutation, and $h_i(\cdot)$ is an element-wise function; for the results we consider in this paper this function is simply a scaling and offset, $f_i(z) = a_i z_i + b_i$ corresponding to a change of units of measurement and intercept.

In this paper, we will build on a recent line of work that leverages paired samples from sparse perturbations to identify latent variables \citep{locatello2020weakly, brehmer2022weakly, ahuja2022weak}. Our approach generalizes \citet{ahuja2022weak} to address the non-injectivity induced by objects, so we will briefly review their main results. \citeauthor{ahuja2022weak} assume that they have access to paired samples, $(x, x')$ where $x = \vecg(z)$, $x' = \vecg(z')$, and $z_i$ is perturbed by a set of sparse offsets $\Delta = \{\delta_1, \dots, \delta_k\}$, such that $z_i' = z_i + \delta_i$ for all $i\in \{1,\dots, k\}$.
    They show that if  $\vecg({\cdot})$ is an injective analytic function from $\R^d\rightarrow \mathcal{X}$,  every $\delta\in\Delta$
    is 1-sparse, and  at least $d$ linearly independent offsets are observed, then an encoder, $f$ that minimizes the following objective recovers the true $z$ up to permutations, scaling and an offset \citep[][Theorem 1]{ahuja2022weak},
    \begin{align}
      \hat{f} \in \text{arg}\,\text{min}_{f'}E_{x, x',\delta}\left[\left(f'(x) + \delta - f'(x')\right)^2\right] \quad\Rightarrow \quad \hat{f}(x) = \hat{z} = \Pi\Lambda z + c
      \label{eqn:latent}
    \end{align}
    where $\Pi$ is a permutation matrix, $\Lambda$ is an invertible diagonal matrix and $c$ is an offset.


\section{Objects result in non-identifiability}
\label{sec:challenges}
We begin by formally characterizing the challenges that arise when images contain multiple objects.

\paragraph{Data generating process.} 
\label{DGP}
We assume that a set $\setz := \{z_i\}_{i=1}^k$ of $k$ objects is drawn from some joint distribution,
$\mathbb{P}_\setz$. In order to compare set and vector representations, let $\vecz{\pi}$ denote a
flattened vector representation of $\setz$ ordered according to some permutation $\pi \in \text{Sym}(k)$,
the symmetric group of permutations of $k$ objects; when $\pi$ is omitted, $\vecz{}$ simply refers to an
arbitrary default ordering (i.e. the identity element of the group). Each object is described by a
$d$-dimensional vector of properties\footnote{A natural extension of the perspective we take in this paper
  is to also treat properties as sets rather than ordered vectors; for example, see \citet{singh2023neural}.
  We leave understanding the identifiability of these approaches to future work.} $z_i\in \R^d$, and
hence $\vecz{}\in \R^{kd}$. We say objects have \emph{shared properties} if the coordinates of $z_i$ have consistent meaning across objects. For example, the objects in Figure \ref{fig:objects} (left), each have $x, y$ coordinates and a color which can be represented by its hue, so $z_i = [p_x^i, p_y^i, h^i]$.
In general, the set of properties associated to an object can be different across objects, but for simplicity, our discussion will focus on properties that are fully shared between all objects. 

\paragraph{The non-injectivity problem.} We observe images $x$ which are generated via a 
generative function $\setg(\cdot)$ that renders a set of object properties into a scene in pixel space,
such that $x=\setg(\setz)$.
While $\setg(\cdot)$ is a set function, we can define an equivalent vector generative function,
$\vecg$, which, by definition, produces the same output as $\setg(\setz)$; i.e. for all
$\pi \in \text{Sym}(k)$, $\vecg(\vecz{\pi}) = \setg(\setz)$.
This generative function $\vecg$ taking vectors as input is consistent with standard
disentanglement assumptions except that it is not injective:

\begin{proposition} If $\vecg(\vecz{\pi}) = \setg(\setz)$ for all $\pi \in \text{Sym}(k)$,
  then $\vecg(\cdot)$ is not injective.
  \label{prop:noninject}
\end{proposition}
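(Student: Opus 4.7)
The plan is to produce an explicit counterexample to injectivity. Recall that $\vecg \colon \R^{kd} \to \mathcal{X}$ fails to be injective iff there exist distinct inputs $v_1 \neq v_2$ with $\vecg(v_1) = \vecg(v_2)$. So it suffices to exhibit a single configuration of objects together with a non-trivial permutation that reorders the flattened vector without changing the value of $\vecg$.

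First I would fix any set $\setz = \{z_i\}_{i=1}^k$ with $k \geq 2$ and with at least two objects having distinct property vectors, say $z_1 \neq z_2$ (this is the generic case; the edge cases $k=1$ or all-identical $z_i$ are the degenerate situations where the hypothesis places no constraint that can force non-injectivity, and one could note explicitly that the proposition is interesting precisely when the latent distribution $\mathbb{P}_\setz$ assigns positive mass to such configurations). Next I would choose the transposition $\pi = (1\ 2) \in \text{Sym}(k)$. Because the first two blocks of $d$ coordinates of $\vecz{\pi}$ and $\vecz{}$ are swapped versions of $z_1$ and $z_2$, and $z_1 \neq z_2$, we get $\vecz{\pi} \neq \vecz{}$ as elements of $\R^{kd}$.

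Finally, I would invoke the hypothesis $\vecg(\vecz{\pi}) = \setg(\setz)$ for all $\pi \in \text{Sym}(k)$, applied once with $\pi = (1\ 2)$ and once with $\pi = \mathrm{id}$, to conclude
\[
\vecg(\vecz{\pi}) \;=\; \setg(\setz) \;=\; \vecg(\vecz{}),
\]
which exhibits two distinct preimages of the same image, proving $\vecg$ is not injective.

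There is no real technical obstacle here; the statement is essentially a bookkeeping consequence of the set-vs-vector distinction. The only subtlety worth flagging in the write-up is the mild nondegeneracy assumption that some $\setz$ in the support of $\mathbb{P}_\setz$ contains at least two distinct objects, since otherwise the claim fails vacuously (if every scene consists of $k$ identical copies of the same object, every permutation fixes $\vecz{}$, and injectivity can coexist with the hypothesis). Assuming this mild condition, the proof reduces to one line: a swap of two distinct objects produces a different vector with the same image under $\vecg$.
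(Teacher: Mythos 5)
Your proof is correct and follows essentially the same route as the paper's: both exhibit two distinct permutations of a set with at least two distinct objects, yielding distinct vectors in $\R^{kd}$ with the same image under $\vecg$. Your explicit flagging of the degenerate cases ($k=1$ or all objects identical) is a small improvement in rigor over the paper's version, which implicitly assumes the set contains distinct elements, but the underlying argument is identical.
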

\begin{proof}
  The contrapositive of the definition of injectivity states that $z_1 \neq z_2$
  implies $g(z_1) \neq g(z_2)$, but by definition of $\vecg(\cdot)$, there exist $z_1 \neq z_2$
  such that $g(z_1) = g(z_2 )$. In particular, for any set $\setz$ and permutations
  $\pi_1 \neq \pi_2 \in \text{Sym}(k)$, the vectors $ \vecz{\pi_1}=z_1 \neq z_2 = \vecz{\pi_2}$.
\end{proof}
This proposition simply states that if images are composed of \emph{sets} of objects, then if
we model the generative function as a map from a Euclidean space, this map will not be injective
by construction. 

{With the exception of \citet{lachapelle_additive_2023}, all of the causal representation learning papers cited in section \ref{sec:related_work} assume the generative function $\vecg$ is injective. 
To see why injectivity is necessary in general, consider an image with two  objects. If the two objects are identical, then there are two disentangled solutions corresponding to the two permutations, so it is not possible to identify a unique solution.}

\paragraph{The object identity problem.}
When applying sparse perturbations on $Z$ (see section \ref{sec:background}), we are effectively perturbing one coordinate of one object. However, how can we know which object of the multiple possible objects in $Z$ we have perturbed? In the case of injective mappings, this is simple: since there is a consistent ordering for them, we know that a coordinate in $\vecz{}$ corresponds to the same object before and after the perturbation.

However, this is no longer the case in our setting.
Since the objects are actually part of a set, we cannot rely on their ordering: the perturbed object can, in principle, freely swap order with other objects; there is no guarantee that the ordering before and after the perturbation remains the same.
In fact, we know that these ordering changes \emph{must} be present due to the \emph{responsibility problem}:

\begin{proposition}[\citet{zhang2019fspool, hayes2023the}] If the data is generated according to the data generating process described above with $\vecg(\vecz{\pi}) := \setg(\setz)$ and $k>1$, then $f(\cdot)$ is discontinuous.
  \label{prop:responsibility}
\end{proposition}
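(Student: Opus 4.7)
The plan is to prove Proposition~\ref{prop:responsibility} by contradiction, constructing an explicit ``swap path'' in latent space whose image under $\vecg$ is a continuous closed loop in pixel space. I will interpret $f$ as an encoder that satisfies the consistency requirement $\vecg(f(x)) = x$, so that $f(x)$ must output some permutation of the true ordered latents which produce $x$ --- this is implicit in $f$ solving the disentanglement problem up to permutation, scaling, and offset as in Equation~\ref{eqn:latent}. Suppose for contradiction that $f$ is continuous.

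The core construction is a path that swaps two objects without the scene ever becoming degenerate. Fix two distinct property vectors $a \neq b \in \R^d$ and build a continuous path $\gamma : [0, 1] \to \R^d \times \R^d$ with $\gamma(0) = (a, b)$ and $\gamma(1) = (b, a)$ such that $\gamma(t)_1 \neq \gamma(t)_2$ for every $t$; concretely, have the two objects travel along opposite semicircular arcs around their midpoint so that at every intermediate time they sit at distinct points. For $k > 2$, hold the remaining $k - 2$ objects fixed throughout. Since $\setg$ is a continuous function of sets, the induced image path $x(t) := \vecg(\gamma(t))$ is continuous in $t$; and because $\setg$ depends only on the underlying set, the endpoint equality $x(0) = x(1)$ follows from $\{a, b\} = \{b, a\}$.

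Next I would use continuity of $f$ to lock down the permutation assignment along the path. For each $t$, the inversion property forces $f(x(t)) \in \{\gamma(t), \mathrm{swap}(\gamma(t))\}$, and these two candidates are distinct for every $t$ by the non-collision property of $\gamma$. Since $f \circ x$ and $\gamma$ are continuous and the two permutation choices stay bounded away from each other along the compact path, the permutation branch selected by $f$ must be locally, hence globally, constant on $[0, 1]$. But whichever branch is chosen, $f(x(0)) \neq f(x(1))$, because $\gamma(0) \neq \gamma(1)$ and $\mathrm{swap}(\gamma(0)) \neq \mathrm{swap}(\gamma(1))$. On the other hand, $f$ is a function and $x(0) = x(1)$, forcing $f(x(0)) = f(x(1))$ --- the desired contradiction.

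The main obstacle I foresee is justifying the invariant ``$f(x(t))$ must be a permutation of $\gamma(t)$''. A generic encoder need not be a strict right-inverse of $\vecg$; the identifiability results recalled in Section~\ref{sec:background} only recover the latents up to permutation, scaling, and offset. However, the argument transfers after composing with that affine reparameterization, since invertible affine maps preserve both continuity and the fact that swapping two distinct coordinates changes the output. A secondary subtlety is continuity of $\vecg$, which I take as standard from the rendering assumption that pixels depend continuously on object positions and properties; without this, the image path $x(t)$ is not guaranteed to be continuous and the argument would need to be reformulated in terms of the topology induced by $\setg$.
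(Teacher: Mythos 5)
Your proof is correct and is essentially the paper's own argument made rigorous: the paper's proof sketch likewise constructs a continuous closed loop in pixel space (a $90^{\circ}$ rotation of a symmetric ball configuration) whose lift to ordered latent space is a permutation, and concludes that a continuous encoder cannot map the identical endpoint images to permuted latents. Your two-object swap along non-colliding arcs is the same mechanism with the branch-selection/local-constancy step spelled out explicitly (the paper defers the general proof to \citet{hayes2023the}); the only nit is that for $k>2$ the candidate set at each $t$ is all $k!$ orderings of the scene rather than just the two orderings of the moving pair, though your local-constancy argument applies unchanged once all $k$ objects are kept pairwise distinct along the path.
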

\begin{proof}[Proof Sketch]
Consider Figure 2, notice that if we perform a $90^{\circ}$ rotation in pixel space of the image, the image is identical, but the latent space has been permuted, since each ball has swapped positions. Because the image on the left and the image on the right are identical in pixel space, any encoder, $f: \mathcal{X}\rightarrow \R^{kd}$, will map them them to identical latents. 
There exists a continuous pixel-space rotation from $0^{\circ}$ to $90^{\circ}$, but it must entail a discontinous swap in which latent is \emph{responsible} for which part of pixel-space according to the encoder. 
\end{proof}

A general proof can be found in \citet{hayes2023the}. These discontinuities manifest themselves as changes in permutation from one $\vecz{\pi_1}$ to another $\vecz{\pi_2 \neq \pi_1}$.
{In disentanglement approaches that leverage paired samples \citep[e.g.][]{ahuja2022weak, brehmer2022weakly}, continuity enables the learning algorithm to implicitly rely on the object identities to stay consistent.
Without continuity, one cannot rely on the fact that $\vecz{}$ and $\vecz{} + \delta$ should be the same up to the perturbation vector $\delta$, because the perturbation may result in a discontinuous change of $\vecz{} + \delta$ when an observation is encoded back to latent space.}
As a consequence, we lose track of which object we have perturbed in the first place, so na\"ive use of existing disentanglement methods fails. 

Another challenge is that the encoder $f$ (Equation 1) has to map observations to $\vecz{}$ in a \emph{discontinuous} way, which is traditionally difficult to model with standard machine learning techniques.

In summary, the unordered nature of objects in $Z$ results in non-injectivity, losing track of object identities, and the need for learning discontinuous functions. These all contribute to the non-identifiability of traditional disentanglement methods in theory and practice.

\section{Object-centric causal representation learning}
\label{sec:methods}

\begin{figure}
    \centering
    \includegraphics[width=0.8\textwidth]{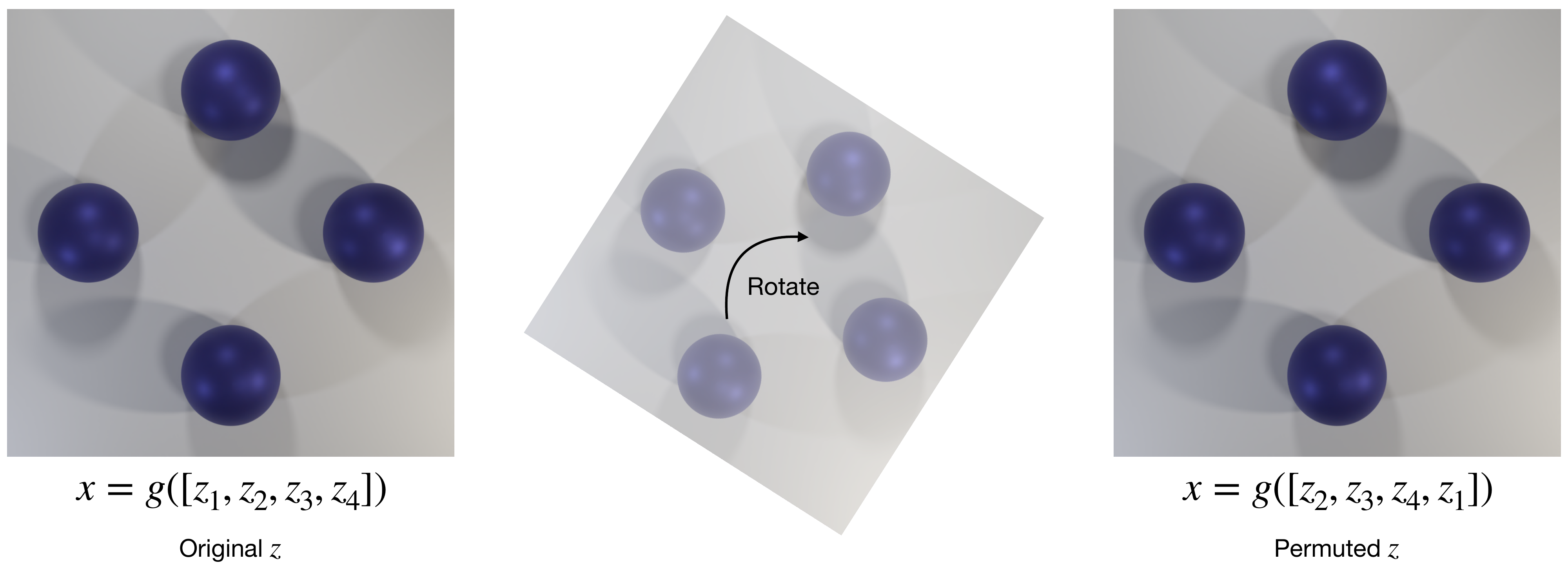}
    \caption{An illustration of the object identity problem. Permuting the order of the latents $[z_1,z_2,z_3,z_3]$ is equivalent to a 90 degree rotation in pixel-space. }
    \label{fig:enter-label}
\end{figure}

A natural solution to this problem is to recognize that the latent representations of multi-object images
are sets and should be treated as such by our encoders and decoders in order to enforce invariance among these
permutations. 
Both \citet{brady2023provably} and \citet{lachapelle_additive_2023} showed that architectures that enforce an appropriate object-wise decomposition in their decoders \emph{provably} disentangle images into object-wise blocks of latent variables. These results do not disentangle the properties of objects, but they solve an important precursor: the assumption that there exists an object-wise decomposition of the generative function is sufficient to partition the latents into objects.

Like these two papers, we will assume that natural images can be decomposed into objects,\footnote{This is a
  pragmatic approximation that suffices for the purposes of this paper, but a careful treatment of objects
  is far more subtle because what we interpret as an ``object'' often depends on a task or a choice of
  hierarchy; for a more nuanced treatment, \citet{smith2019promise}'s Chapter 8 is an excellent introduction
  into the subtleties around demarcating an ``object''. } each of which occupies a disjoint set of pixels.
When this is the case, we say that an image is \emph{object-separable}. To define object separability
formally, we will need to consider a partition $P$ of an image into $k$ disjoint subsets of pixels
$P = \{x^{(1)}, \dots, x^{(k)}\}$ indexed by an index set $\mathcal{I}_P=\{1, \dots, k\}$; further,
denote an index set that indexes the set of latent variables $Z$ as $\mathcal{I}_{Z}$. We can then say,


\begin{definition} An image, $x$, is \textbf{object-separable} if there exists an \textbf{object-wise}
  partition $P$ and a bijection $\sigma: \mathcal{I}_P \rightarrow \mathcal{I}_{Z}$ that associates each
  subset of pixels in $P$ with a particular element of the set of latents, $z_i$, such that each subset of
  pixels $x^{(i)}\in P$ is the output of an injective map with respect to its associated latent
  $z_{\sigma(i)}$. 
  That is, for all $i$, $(x^{(i)'}\subset\setg(Z'), \,   x^{(i)}\subset \setg(Z))$, we have that
  $x^{(i)'} = x^{(i)}$ implies  $z'_{\sigma(i)} = z_{\sigma(i)}$.
  \label{def:object-wise}
\end{definition}


This definition says that an image can be separated into objects if it can be partitioned into parts
such that each part is rendered via an injective map from some latent $z_i$.
We can think of each $x^{(i)}$ as a patch of pixels, with a bijection $\sigma$
that relates each of the $k$ patches of pixels in the partition $\{x^{(i)}\}_{i=1}^k$ to a latent
variable in $Z = \{z_i\}_{i=1}^k$.
Each patch ``depends'' on its associated latent via an injective map.



\citet{brady2023provably} and \citet{lachapelle_additive_2023} give two different formal characterizations of partitions $P$ that are consistent with our object-wise definition. \citeauthor{brady2023provably}'s characterization requires that a differentiable generative function $\setg$ is \emph{compositional}, in the sense that each $x^{(i)}\in P$ only functionally depends\footnote{Functional dependence is defined by non-zero partial derivatives, i.e. $\frac{\partial x^{i}}{\partial z_j}\neq 0$.} on a single $z_j \in Z$, and \emph{irreducible} in the sense no $x^{(i)}\in P$ can be further decomposed into non-trivial subsets that have functionally independent latents. \citeauthor{lachapelle_additive_2023}'s assumption is weaker than ours in that they only require that the generative function is defined as $\setg(Z) = \sigma(\sum_{z_i \in Z} g_i(z_i))$ where $\sigma$ is an invertible function, and that $\setg$ is a diffeomorphism that is ``sufficiently nonlinear'' \citep[see Assumption 2][]{lachapelle_additive_2023}; object-separable images are a special case with $\sigma$ as the identity function and each $g_i(\cdot)$ rendering a disjoint subset of $x$, and hence their results apply to our setting.

\paragraph{Disentangling properties with object-centric encoding.}
In section \ref{sec:challenges} we showed that the assumptions underlying sparse perturbation-based disentanglement approach are violated in multi-object scenes. But, the results from \citet{brady2023provably} and \citet{lachapelle_additive_2023} show that the objects can be separated into disjoint (but entangled) sets of latent variables. This suggests a natural approach to disentangling properties in multi-object scenes: 
\begin{itemize}[leftmargin=0.75cm]
    \item we can reduce the multi-object disentanglement problem to a single-object problem with an object-wise partition of the image. Within each patch of pixels $x^{(i)}\in P$ injectivity holds, and so we no longer have multiple solutions  at a patch level. This partition is identifiable and we can use an object-centric architecture to learn the object-wise partition. We require that this object-centric architecture can handle the responsibility problem.
    \item we leverage \citet{ahuja2022weak}'s approach to using weak supervision to disentangle the properties of each object individually. Since we assume that properties between objects are shared, this requires a factor of $k$ fewer perturbations in the perturbation set $\Delta$, where $k$ the number of objects.
    \item we address the object identity problem where we lose track of object identities after perturbations through an explicit matching procedure that re-identifies the object being perturbed.
\end{itemize}

See section \ref{sec:method} for details of how we implement this. This approach not only addresses the challenges outlined in Section \ref{sec:challenges}, it also significantly reduces the number of perturbations that we have to apply in order to disentangle shared properties. 

\begin{theorem} [informal]
      \label{thm:compare}
If a data generating process outputs observations with $k$ objects that have shared properties, then an object-centric architecture of the form ${F}(x):= \{\mathfrak{f}(x^{(i)})\}_{x^{(i)}\in P}$ where $P$ is an object-wise partition and $\mathbf{f}: \mathcal{X}\rightarrow \R^d$ will disentangle in $k$ times fewer perturbations than an encoder of the form $f: \mathcal{X}\rightarrow \R^{kd}$.
\end{theorem}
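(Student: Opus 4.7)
The plan is to compare the number of linearly independent sparse perturbations required for each encoder to satisfy the hypotheses of \citet{ahuja2022weak}'s Theorem~1 (recalled in Equation~\ref{eqn:latent}), exploiting the fact that Theorem~1 demands as many linearly independent 1-sparse offsets as the dimension of the latent space into which the encoder maps. The ratio of these two dimensions will give the factor of $k$.

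First I would treat the baseline encoder $f:\mathcal{X}\to\R^{kd}$. Since the concatenated latent $\vecz{}$ lives in $\R^{kd}$, applying \citeauthor{ahuja2022weak}'s theorem requires at least $kd$ linearly independent 1-sparse perturbations in $\Delta$ to pin down $\vecz{}$ up to permutation, diagonal scaling and offset. In particular, since properties are shared across objects, each of the $d$ property axes must be perturbed separately on each of the $k$ objects, because within $\R^{kd}$ these are a priori independent coordinates — the encoder has no mechanism to transport information about ``hue'' learned from object~$i$ to object~$j$.

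For the object-centric encoder $F(x)=\{\mathfrak{f}(x^{(i)})\}_{x^{(i)}\in P}$, I would proceed in three steps. Step (i): Invoke the object-separability assumption (Definition~\ref{def:object-wise}) together with the identifiability results of \citet{brady2023provably} and \citet{lachapelle_additive_2023} to conclude that the partition $P$ is itself identifiable and that, restricted to any patch $x^{(i)}$, the map $z_{\sigma(i)}\mapsto x^{(i)}$ is injective. This reduces the multi-object problem to $k$ single-object problems, each on a latent space of dimension~$d$. Step (ii): Use the shared-properties hypothesis to note that a single encoder $\mathfrak{f}:\mathcal{X}\to\R^d$ is applied to every patch, so a 1-sparse perturbation of any object's $j$-th coordinate produces a paired observation whose loss in Equation~\ref{eqn:latent} constrains the same component of $\mathfrak{f}$ regardless of which object was perturbed. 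Step (iii): Pair each patch with its perturbed counterpart via an explicit matching procedure (the re-identification step of Section~\ref{sec:methods}), so the 1-sparse offsets can be transported from sets back to patch pairs. Then \citeauthor{ahuja2022weak}'s Theorem~1 applied to $\mathfrak{f}$ requires only $d$ linearly independent 1-sparse offsets to identify $\mathfrak{f}$ up to permutation, diagonal scaling, and shift — a factor of $k$ reduction relative to $kd$.

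The main obstacle is Step (iii): rigorously handling the object identity problem. A perturbation is labeled in latent space by which \emph{coordinate} was offset, but after the set is rendered and re-encoded the correspondence between a latent slot and a patch may be permuted. I would formalize this by showing that, conditioned on a correct matching between pre- and post-perturbation patches (which the object-centric matching guarantees under mild genericity, e.g.\ that at most one object is perturbed per pair and the perturbation does not collapse two distinct objects to the same appearance), the paired losses decompose into independent per-object contributions that share the encoder $\mathfrak{f}$. Once the matching is in place, the remainder is a direct appeal to Theorem~1 of \citet{ahuja2022weak} at the single-object level, and the counting argument delivers the stated factor of~$k$.
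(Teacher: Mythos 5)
Your proposal follows essentially the same route as the paper's appendix proof: both reduce the problem to $k$ single-object instances via the object-wise partition, apply \citet{ahuja2022weak}'s theorem to the shared per-patch encoder $\mathfrak{f}$ (so that one perturbation supervises every object and only $d$ linearly independent offsets are needed, versus $kd$ for the vector encoder), and handle the identity problem by arguing the matching over patch pairs is almost surely unique under $1$-sparse perturbations. The only cosmetic difference is that the paper makes the baseline comparison well-posed by constructing an injective DGP (appending object ids) before invoking the $kd$-perturbation count, a detail your argument leaves implicit.
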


The proof is given in Appendix \ref{appendix:proofs}. The main insight is that if we have an object-centric architecture that learns an object-wise partition $P$ and uses the same encoding function $\mathfrak{f}$ on every patch, then every perturbation provides weak supervision to every object, despite the fact that only one was perturbed. As a result, we do not need to disentangle each object's properties separately, and hence we reduce the number of required interventions by a factor of $k$.


\section{Method}
\label{sec:method}

\paragraph{Object-wise partitions} There exist a number of ways to decompose an image into objects, but for our purposes, pixel segmentation-based
approaches \citep{greff2019multi,  Locatello2020, zhang2023unlocking} let us directly adapt existing
disentanglement techniques to work with object-centric encoders. A pixel segmentation encoder $\hat{f}$ maps from images $x$ to a set of slot vectors $\{s_1, \dots, s_k\}$, each of which depends on a subset of the pixels $x^{(i)}\in P$. Images are then reconstructed using a slot decoder $\hat{g}$ that maps from the set of slot representation back to pixel space.
The dependence between slots and patches of pixels is typically controlled by a soft-attention matrix, which will typically not result in a partition of the pixels. In our implementation, we use \citeauthor{zhang2023unlocking}'s SA-MESH modification of the original \citeauthor{Locatello2020} slot attention architecture, which adds an entropy regularization term to learn sparse attention matrices that do approximately partition the input by encouraging the subsets of pixels $x^{(i)}$ to be disjoint (for details on the architectures, see Appendix \ref{appendix:slot-attention}).
Importantly for us, \citet{zhang2023unlocking} is \emph{exclusively multiset-equivariant} \citep{zhang2022multisetequivariant}, which allows it to model discontinuous functions, thus handling the responsibility problem.

Slot attention is usually trained with a reconstruction loss from relatively high-dimensional
per-object slot representations, $s_i \in \R^{D}$, but for the images that we work with, we want a relatively low dimensional
latent description (in the simplest case, just the two dimensions representing the $(x, y)$ coordinates of each object).
To disentangle these high-dimensional slot representations, we simply add a projection head, $\hat{p}: s_i \rightarrow \hat{z}_i$, that is trained by a latent space loss. 


\paragraph{Disentanglement via weak supervision with matching}

\citeauthor{ahuja2022weak} assume access to pairs of images $(x, x')$ that differ by a sparse offset $\delta$. 
They enforce this assumption via a disentanglement loss that requires that the latent representations of this pair of images differ by $\delta$, such that $\hat{f}(x) + \delta = \hat{f}(x')$. When using a slot attention architecture, we introduce a \emph{matching step} to the loss to infer the object to which
the offset $\delta$ was applied.
With $1$-sparse $\delta$ vectors, the matching step reduces to a simple minimization over a
cost matrix that measures $\Vert \hat{z}(x^{'(j)}) - (\hat{z}(x^{(i)}) + \delta) \Vert^2$ for all pairs of slots $i,j$.
In Appendix \ref{appendix:matching}, we provide a more general matching procedure that applies to settings with dense offsets $\delta$.
We jointly optimize the following reconstruction and disentanglement loss,

\begin{equation}
\hat{f}, \hat{g}, \hat{p} \in \argmin_{f,g,p} E_{x}[\Vert x - g(f(x)) \Vert^2] + E_{x, x', \delta}[\min_{i,j}\Vert (p(f(x')^{(j)}) - (p(f(x)^{(i)}) + \delta_t) \Vert^2]
\label{eq:total_loss}
\end{equation}

The first term in this loss enforces that the encoder / decoder pair $\hat{f}, \hat{g}$ capture enough information in the slot representations $s_i$ to reconstruct $x$. The second term contains the matching term and ensures that the function that projects from slot representation to latents $\hat{p}$ disentangles the slot representations into individual properties. The offset $\delta$ could be known or unknown to the model, and for the remainder of this paper, we focus on the more challenging and natural case of unknown offsets. See appendix \ref{appendix:perturbation_mechanisms} for more details.

\section{Related work}
\label{sec:related_work}
\paragraph{Causal representation learning}
Our work builds on the nascent field of causal representation learning \citep{scholkopf2021toward}. In particular, our disentanglement approaches builds on ideas in \citet{ahuja2022weak} which uses the same assumptions as \citet{locatello2020weakly} but relaxes the requirement that the latent variables are independently distributed. These approaches form part of a larger body of recent work that shows the importance of sparsity and weak supervision from actions in disentanglement \citep{lachapelle2022disentanglement, lachapelle2022partial, brehmer2022weakly, lippe2022citris, lippe2023causal, lippe2023biscuit}. In the appendix, we also show how known mechanisms from \citet{ahuja2022properties} can be dealt with in our framework.
A closely related, but more general setting, is the recent progress on disentanglement from interventional distributions which do not require paired samples \citep{ahuja2023interventional, buchholz2023learning, vonkugelgen2023nonparametric}; we believe a useful extension of our approach would consider these settings.
This literature builds on the foundational work from the nonlinear independent component analysis (ICA) literature \citep{hyvarinen2016unsupervised, hyvarinen2017nonlinear, hyvarinen2019nonlinear, khemakhem2020variational}. 

\paragraph{Object-centric learning.}
\label{sec:related_object}
Natural data can often be decomposed into smaller entities---objects---that explain the data.
The overarching goal of object-centric learning is to model such data in terms of these multiple objects.
The reason for this is simple: it is usually easier to reason over a small set of relevant objects rather than, for example, a large grid of feature vectors.
Representing data in this way has downstream benefits like better robustness \citep{SRN}.
An important line of research in this area is how to obtain such objects from data like images and video in the first place.
Typically, a reconstruction setup is used: given an image input, the model learns the objects in the latent space, which are then decoded back into the original image with a standard reconstruction loss \citep{Locatello2020, van2018relational}. %
\citet{nguyen2023reusable} propose RSM, a conceptually close idea to our work. They jointly learn object-centric representations with a modular dynamics model by minimizing a rolled out reconstruction loss. However, they do not obtain any disentanglement of object properties, and the form of our proposed weak-supervision provides insights to the effectiveness of their method for improving generalization.

We use slot attention since it makes very few assumptions about the desired data. For instance, some methods model foreground differently from background. Additionally, DINOSAUR \citep{seitzer2022bridging} shows recent success on more complex images, which demonstrates the versatility of the slot attention approach. While in general object-centric models operate on image inputs and thus identify visual objects, it is in principle applicable to other domains like audio \citep{reddy2023audioslots} as well.

\section{Empirical evaluation}
\label{sec:results}

\paragraph{Setup.} We evaluated our method on 2D and 3D synthetic image datasets that allowed us to
carefully control various aspects of the environment, such as the number of objects, their sizes,
shapes, colors, relative position, and dynamics. 
Examples of our 2D and 3D datasets are shown in figures \ref{fig:objects},\ref{fig:3D_objects} respectively.
The object-wise true latents in either dataset consist of ${z}=(p_x,p_y,h,s,r,\phi)$,
where $p_x,p_y$ denote the coordinates of the center of an object, followed by color hue $h$,
shape $s$, size $r$, and rotation angle $\phi$ about the z-axis. Therefore, we deal with both discrete and continuous properties. For further details on the dataset generation, see appendix \ref{appendix:datasets}.

\begin{figure}[t]
  \centering
  \includegraphics[width=0.25\textwidth]{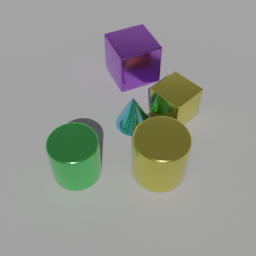}
  \hspace{15mm}
  \includegraphics[width=0.25\textwidth]{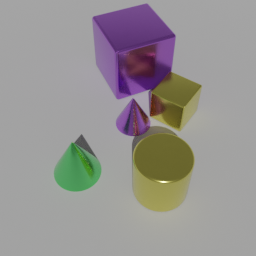}
  \caption{\emph{(Left)} An example image before any perturbation. \emph{(Right)} Possible
    perturbations in the synthetic 3D dataset, i.e. change in size, orientation, color, position,
    and shape. 
  }
  \label{fig:3D_objects}
\end{figure}

\paragraph{Disentanglement Metrics.}
We compared $\hat{z}$---the projections of non-background slots---to the true latents $z$ of
objects to measure the disentanglement of the properties in  $\hat{z}$. We evaluated identifiability
of the learned representations either up to affine transformations or up to permutation and scaling.
These two metrics were computed by fitting a linear regression between $z,\hat{z}$ and reporting the
coefficient of determination $R^2$, and using the mean correlation coefficient (MCC)
\citep[][]{hyvarinen2016unsupervised, hyvarinen2017nonlinear}.

\begin{table}[h]
  \centering{
    {\caption{Permutation Disentanglement (MCC) scores on 2D shapes test set under \textit{unknown}
          fully sparse perturbations. All results are averaged over 3 seeds except those requiring to train
          SA-MESH from scratch that were trained only once. SA-LR (supervised) achieves a score of 1.0 in all
          settings.}
        \label{tab:mcc_scores_2D_shapes}}%
      {
        \begin{tabular}{c ccc ccc}
          \hline
          \multicolumn{1}{c}{}
                                              & \multicolumn{3}{c}{$\text{pos}_x,\text{pos}_y$}
                                              & \multicolumn{3}{c}{$\text{pos}_x,\text{pos}_y,\text{color},\text{size},\text{rotation}$}                                                                                                                                                                                                     \\ \cmidrule(lr){2-4} \cmidrule(lr){5-7}
          \multicolumn{1}{c}{Model}           & \multicolumn{1}{c}{$n=2$}                                                                & \multicolumn{1}{c}{$n=3$}                    & $n=4$                    & \multicolumn{1}{c}{$n=2$}                    & \multicolumn{1}{c}{$n=3$}                    & $n=4$                     \\ \hline
          \multicolumn{1}{c}{Ours}            & \multicolumn{1}{c}{\textbf{1.00} $\pm 0.01$}                                             & \multicolumn{1}{c}{\textbf{1.00} $\pm 0.01$} & 0.98 $\pm 0.01$ & \multicolumn{1}{c}{\textbf{0.95} $\pm 0.01$} & \multicolumn{1}{c}{\textbf{0.93} $\pm 0.00$} & \textbf{0.94}  $\pm 0.01$ \\
          \multicolumn{1}{c}{SA-RP}           & \multicolumn{1}{c}{0.80}                                                                 & \multicolumn{1}{c}{0.90}                     & 0.82                     & \multicolumn{1}{c}{0.58}                     & \multicolumn{1}{c}{0.52}                     & 0.50                      \\
          \multicolumn{1}{c}{SA-PC}           & \multicolumn{1}{c}{\textbf{1.00}}                                                                 & \multicolumn{1}{c}{\textbf{1.00}}                     & \textbf{1.00}                     & \multicolumn{1}{c}{0.86}                     & \multicolumn{1}{c}{0.85}                     & 0.84                      \\
          \multicolumn{1}{c}{CNN$^{\dagger}$} & \multicolumn{1}{c}{0.96 $\pm 0.02$}                                                      & \multicolumn{1}{c}{0.99 $\pm 0.01$}          & 0.98  $\pm 0.02$         & \multicolumn{1}{c}{0.91 $\pm 0.01$}          & \multicolumn{1}{c}{0.89 $\pm 0.01$}          & 0.90 $\pm 0.01$           \\
          \multicolumn{1}{c}{CNN}             & \multicolumn{1}{c}{0.40 $\pm 0.01$}                                                      & \multicolumn{1}{c}{0.25 $\pm 0.03$}          & 0.21 $\pm 0.01$          & \multicolumn{1}{c}{0.58 $\pm 0.00$}          & \multicolumn{1}{c}{0.42 $\pm 0.00$}          & 0.27 $\pm 0.01$           \\ \hline
        \end{tabular}
      }
  }
\end{table}

\begin{table}[th]
  \centering{
    {\caption{LD scores on 3D shapes test set under \textit{unknown} fully sparse perturbations. SA-LR achieves a score of 1.0 in all settings.}
        \label{tab:linear_disentanglement_scores_3D_shapes}}%
      {
        \begin{tabular}{c ccc ccc}
          \hline
          \multicolumn{1}{c}{}
                                    & \multicolumn{3}{c}{$\text{pos}_x,\text{pos}_y,\text{color}$}
                                    & \multicolumn{3}{c}{$\text{pos}_x,\text{pos}_y,\text{color},\text{size},\text{rotation}$}                                                                                                                                                                                                     \\ \cmidrule(lr){2-4} \cmidrule(lr){5-7}
          \multicolumn{1}{c}{Model} & \multicolumn{1}{c}{$n=2$}                                                                & \multicolumn{1}{c}{$n=3$}                    & $n=4$                     & \multicolumn{1}{c}{$n=2$}                    & \multicolumn{1}{c}{$n=3$}                    & $n=4$                    \\ \hline
          \multicolumn{1}{c}{Ours}  & \multicolumn{1}{c}{\textbf{0.99} $\pm 0.01$}                                             & \multicolumn{1}{c}{\textbf{0.99} $\pm 0.00$} & \textbf{1.00}  $\pm 0.01$ & \multicolumn{1}{c}{\textbf{0.91} $\pm 0.03$} & \multicolumn{1}{c}{\textbf{0.95} $\pm 0.01$} & \textbf{0.93} $\pm 0.01$ \\
          \multicolumn{1}{c}{SA-RP} & \multicolumn{1}{c}{0.67}                                                                 & \multicolumn{1}{c}{0.58}                     & 0.58                      & \multicolumn{1}{c}{0.51}                     & \multicolumn{1}{c}{0.56}                     & 0.60                     \\
          \multicolumn{1}{c}{SA-PC} & \multicolumn{1}{c}{0.64}                                                                 & \multicolumn{1}{c}{0.62}                     & 0.64                      & \multicolumn{1}{c}{0.56}                     & \multicolumn{1}{c}{0.76}                     & 0.76                     \\ \hline
        \end{tabular}
      }}
\end{table}

\begin{table}[thpb]
  \centering{
    {\caption{MCC scores on 3D shapes test set under \textit{unknown} fully sparse perturbations.
          SA-LR achieves a score of 1.0 in all settings.}
        \label{tab:mcc_scores_3D_shapes}}%
      {
        \begin{tabular}{c ccc ccc}
          \hline
          \multicolumn{1}{c}{}
                                    & \multicolumn{3}{c}{$\text{pos}_x,\text{pos}_y,\text{color}$}
                                    & \multicolumn{3}{c}{$\text{pos}_x,\text{pos}_y,\text{color},\text{size},\text{rotation}$}                                                                                                                                                                                                    \\ \cmidrule(lr){2-4} \cmidrule(lr){5-7}
          \multicolumn{1}{c}{Model} & \multicolumn{1}{c}{$n=2$}                                                                & \multicolumn{1}{c}{$n=3$}                    & $n=4$                    & \multicolumn{1}{c}{$n=2$}                    & \multicolumn{1}{c}{$n=3$}                    & $n=4$                    \\ \hline
          \multicolumn{1}{c}{Ours}  & \multicolumn{1}{c}{\textbf{0.99} $\pm 0.01$}                                             & \multicolumn{1}{c}{\textbf{0.99} $\pm 0.00$} & \textbf{0.99 $\pm 0.01$} & \multicolumn{1}{c}{\textbf{0.89} $\pm 0.02$} & \multicolumn{1}{c}{\textbf{0.92} $\pm 0.03$} & \textbf{0.92} $\pm 0.02$ \\
          \multicolumn{1}{c}{SA-RP} & \multicolumn{1}{c}{0.62}                                                                 & \multicolumn{1}{c}{0.54}                     & 0.54                     & \multicolumn{1}{c}{0.49}                     & \multicolumn{1}{c}{0.50}                     & 0.46                     \\
          \multicolumn{1}{c}{SA-PC} & \multicolumn{1}{c}{0.69}                                                                 & \multicolumn{1}{c}{0.68}                     & 0.70                     & \multicolumn{1}{c}{0.64}                     & \multicolumn{1}{c}{0.77}                     & 0.78                     \\ \hline
        \end{tabular}
      }}
\end{table}

\paragraph{Baselines.}
Our baselines were selected to use a series of linear probes to evaluate how much of the property information is already present in the slot representations $s_i$ of a vanilla SA-MESH implementation which was only trained for reconstruction. We compare this with our approach which explicitly optimizes to disentangle the properties with the weakly supervised loss in Eq \ref{eq:total_loss}.
For the baselines, we mapped from the slot representations to a $d$-dimensional latent space with \emph{random projections} (RP)---which preserve
distances in the projected space to obtain a crude estimate of vanilla slot attention disentanglement---the first $d$ \emph{principal components} (PC), and \emph{linear regression} (LR) which provides a supervised upper-bound on what is achievable with linear maps.
Finally, we also included a standard ResNet18 \citep{he2016deep} (denoted \textit{CNN}) trained with \cite{ahuja2022weak}'s procedure that does not address injectivity issues, and the same trained on a DGP which is modified to be injective\footnote{We make $g$ injective by using the
  properties that are not the target of disentanglement, i.e., if $x,y$ are the target properties, we will
  uniquely color each object based on its order in the default permutation. If $x,y,c$ are targets, we will
  use unique shapes for each object based on its order in the permutation. The same logic follows for other
  property combinations.}  (denoted \textit{CNN$^\dagger$}). 

\paragraph{2D Shapes.}
The results in Table 
\ref{tab:mcc_scores_2D_shapes} (with additional results in appendix \ref{appendix:results_continued_2D}) confirmed that as
long as the generative function is injective we can empirically achieve identification (see \textit{CNN$^\dagger$}).
But the moment we drop any ordering over the objects and render $x$ via a non-injective function, then
identification via ResNet18, which is suited only to injective generative functions, fails disastrously (see the row
corresponding to CNN in table \ref{tab:linear_disentanglement_scores_2D_shapes}. 
On the other hand, we can see that our method has no difficulty identifying object
properties because it treats them as a set by leveraging slot attention and a matching procedure. Additionally, the shared structure of learned latents in our method significantly improves the sample efficiency for disentanglement (see appendix \ref{appendix:sample_efficiency_advantage}).
The strong performance of the principal components of vanilla SA-MESH on the position disentanglement task likely results from the positional encoding. On the more complex task that also involves color, size and rotation, MCC performance drops for SA-PC, though it is still surprisingly high given that the model is just trained for reconstruction. This is likely because these are very simple images with properties that were selected independently, uniformly at random so the slot principal components align with the ground-truth axes of variation in the data.

\paragraph{3D Shapes.} Figure \ref{fig:3D_objects} shows examples of perturbations that the model observes and uses for disentangling object properties in tables \ref{tab:linear_disentanglement_scores_3D_shapes} and \ref{tab:mcc_scores_3D_shapes}. We present the disentanglement scores for various combinations of properties and environments with $k=\{2,3,4\}$ number of objects in the scene. Since non-injective CNN failed consistently in the simpler 2D dataset, we do not evaluate it with 3D shapes. 
Results for our method are averaged over 3 seeds, but since the baselines require training SA-MESH from scratch, they were trained only once as it is computationally expensive to obtain excellent reconstructions with slot attention. These results essentially confirm our findings in the simpler 2D dataset, and demonstrate how treating the scene as a set with our method results in perfect disentanglement of object properties. For the results on other combinations of properties, please see appendix \ref{appendix:results_continued_3D}.

\section{Conclusion}
This study establishes a connection between causal representation learning and object-centric learning,
and (to the best of our knowledge) for the first time shows how to achieve disentangled representations
in environments with multiple interchangeable objects. The importance of recognizing this synergy is two-fold. Firstly,
causal representation learning has largely ignored the subtleties of objects in assuming 
injectivity and fixed $\R^d$ representations. 
Conversely, object-centric learning has not dealt with the challenge of unsupervised disentanglement. Yet disentangled representations can
significantly improve a model's generalization capabilities under distribution shifts, and could also
allow for learning parsimonious models of the dynamics when such proper representations are achieved,
which we deem as important avenues for future research. In this study we provided empirical evidence
showcasing the successful disentanglement of object-centric representations through the fusion of slot
attention with recent advances in causal representation learning. 

\section{Limitations}
Our study focuses on showing when disentanglement is possible when treating object-centric environments
as a set of representations instead of fixed-size vectors. We have analyzed the performance of our model
comprehensively on two synthetic datasets that are relatively limited in capturing the complexities of real-world scenarios. Yet, we believe and showed such analysis is a necessary first step to identify
the intricacies involved in making our algorithm work. Our analysis has been limited in a number of
directions. First, while we do consider a wide range of continuous and discrete properties to be disentangled, the number of objects we use is rather low, which ideally should be scaled to real-world scenes containing more objects. Second, although our experiments include artifacts related to occlusion, depth, and lighting, in all of our experiments we simplify the problem by having the objects situated on homogenous backgrounds, whereas real-world scenes would comprise more complex backgrounds.
Such decisions were mainly due to (1) generating datasets of size more than 5k for each combination of
properties being a computationally heavy task on its own, (2) training SA-MESH from scratch for each
combination of properties and number of objects would quickly add up as each training takes $\sim12$ hours on a single A100 GPU to achieve nice reconstructions, (3) details related to the background and the number of objects
are tangential to the focus of this study, which is to demonstrate how to disentangle the causal factors in an
object-centric environment.

\bibliography{iclr2024_conference}

\begin{thebibliography}{53}
\providecommand{\natexlab}[1]{#1}
\providecommand{\url}[1]{\texttt{#1}}
\expandafter\ifx\csname urlstyle\endcsname\relax
  \providecommand{\doi}[1]{doi: #1}\else
  \providecommand{\doi}{doi: \begingroup \urlstyle{rm}\Url}\fi

\bibitem[Ahuja et~al.(2022{\natexlab{a}})Ahuja, Hartford, and Bengio]{ahuja2022properties}
Kartik Ahuja, Jason Hartford, and Yoshua Bengio.
\newblock Properties from mechanisms: an equivariance perspective on identifiable representation learning.
\newblock In \emph{International Conference on Learning Representations}, 2022{\natexlab{a}}.
\newblock URL \url{https://openreview.net/forum?id=g5ynW-jMq4M}.

\bibitem[Ahuja et~al.(2022{\natexlab{b}})Ahuja, Hartford, and Bengio]{ahuja2022weak}
Kartik Ahuja, Jason Hartford, and Yoshua Bengio.
\newblock Weakly supervised representation learning with sparse perturbations.
\newblock In \emph{Neural Information Processing Systems}, 2022{\natexlab{b}}.

\bibitem[Ahuja et~al.(2023)Ahuja, Mahajan, Wang, and Bengio]{ahuja2023interventional}
Kartik Ahuja, Divyat Mahajan, Yixin Wang, and Yoshua Bengio.
\newblock Interventional causal representation learning, 2023.

\bibitem[Brady et~al.(2023)Brady, Zimmermann, Sharma, Schölkopf, von Kügelgen, and Brendel]{brady2023provably}
Jack Brady, Roland~S. Zimmermann, Yash Sharma, Bernhard Schölkopf, Julius von Kügelgen, and Wieland Brendel.
\newblock Provably learning object-centric representations.
\newblock \emph{arXiv preprint arXiv: 2305.14229}, 2023.

\bibitem[Brehmer et~al.(2022)Brehmer, De~Haan, Lippe, and Cohen]{brehmer2022weakly}
Johann Brehmer, Pim De~Haan, Phillip Lippe, and Taco Cohen.
\newblock Weakly supervised causal representation learning.
\newblock \emph{arXiv preprint arXiv:2203.16437}, 2022.

\bibitem[Buchholz et~al.(2023)Buchholz, Rajendran, Rosenfeld, Aragam, Schölkopf, and Ravikumar]{buchholz2023learning}
Simon Buchholz, Goutham Rajendran, Elan Rosenfeld, Bryon Aragam, Bernhard Schölkopf, and Pradeep Ravikumar.
\newblock Learning linear causal representations from interventions under general nonlinear mixing, 2023.

\bibitem[Chang et~al.(2022)Chang, Griffiths, and Levine]{chang2022object}
Michael Chang, Thomas~L. Griffiths, and Sergey Levine.
\newblock Object representations as fixed points: Training iterative inference algorithms with implicit differentiation.
\newblock In \emph{ICLR Workshop on Deep Generative Models for Highly Structured Data}, 2022.
\newblock URL \url{https://openreview.net/forum?id=rV3Gon4dD-5}.

\bibitem[Cho et~al.(2014)Cho, van Merrienboer, Gulcehre, Bahdanau, Bougares, Schwenk, and Bengio]{cho2014learning}
Kyunghyun Cho, Bart van Merrienboer, Caglar Gulcehre, Dzmitry Bahdanau, Fethi Bougares, Holger Schwenk, and Yoshua Bengio.
\newblock Learning phrase representations using rnn encoder-decoder for statistical machine translation.
\newblock \emph{arXiv preprint arXiv: 1406.1078}, 2014.

\bibitem[Cuturi(2013)]{cuturi2013sinkhorn}
Marco Cuturi.
\newblock Sinkhorn distances: Lightspeed computation of optimal transportation distances.
\newblock \emph{NEURIPS}, 2013.

\bibitem[Goyal et~al.(2019)Goyal, Lamb, Hoffmann, Sodhani, Levine, Bengio, and Sch{\"o}lkopf]{goyal2019recurrent}
Anirudh Goyal, Alex Lamb, Jordan Hoffmann, Shagun Sodhani, Sergey Levine, Yoshua Bengio, and Bernhard Sch{\"o}lkopf.
\newblock Recurrent independent mechanisms.
\newblock \emph{arXiv preprint arXiv:1909.10893}, 2019.

\bibitem[Goyal et~al.(2020)Goyal, Lamb, Gampa, Beaudoin, Levine, Blundell, Bengio, and Mozer]{goyal2020object}
Anirudh Goyal, Alex Lamb, Phanideep Gampa, Philippe Beaudoin, Sergey Levine, Charles Blundell, Yoshua Bengio, and Michael Mozer.
\newblock Object files and schemata: Factorizing declarative and procedural knowledge in dynamical systems.
\newblock \emph{arXiv preprint arXiv:2006.16225}, 2020.

\bibitem[Greff et~al.(2019)Greff, Kaufman, Kabra, Watters, Burgess, Zoran, Matthey, Botvinick, and Lerchner]{greff2019multi}
Klaus Greff, Rapha{\"e}l~Lopez Kaufman, Rishabh Kabra, Nick Watters, Christopher Burgess, Daniel Zoran, Loic Matthey, Matthew Botvinick, and Alexander Lerchner.
\newblock Multi-object representation learning with iterative variational inference.
\newblock In \emph{International Conference on Machine Learning}, pp.\  2424--2433. PMLR, 2019.

\bibitem[Greff et~al.(2022)Greff, Belletti, Beyer, Doersch, Du, Duckworth, Fleet, Gnanapragasam, Golemo, Herrmann, Kipf, Kundu, Lagun, Laradji, Liu, Meyer, Miao, Nowrouzezahrai, Oztireli, Pot, Radwan, Rebain, Sabour, Sajjadi, Sela, Sitzmann, Stone, Sun, Vora, Wang, Wu, Yi, Zhong, and Tagliasacchi]{greff2021kubric}
Klaus Greff, Francois Belletti, Lucas Beyer, Carl Doersch, Yilun Du, Daniel Duckworth, David~J Fleet, Dan Gnanapragasam, Florian Golemo, Charles Herrmann, Thomas Kipf, Abhijit Kundu, Dmitry Lagun, Issam Laradji, Hsueh-Ti~(Derek) Liu, Henning Meyer, Yishu Miao, Derek Nowrouzezahrai, Cengiz Oztireli, Etienne Pot, Noha Radwan, Daniel Rebain, Sara Sabour, Mehdi S.~M. Sajjadi, Matan Sela, Vincent Sitzmann, Austin Stone, Deqing Sun, Suhani Vora, Ziyu Wang, Tianhao Wu, Kwang~Moo Yi, Fangcheng Zhong, and Andrea Tagliasacchi.
\newblock Kubric: a scalable dataset generator.
\newblock \emph{Computer Vision And Pattern Recognition}, 2022.

\bibitem[Gresele et~al.(2021)Gresele, Kügelgen, Stimper, Schölkopf, and Besserve]{gresele2021}
Luca Gresele, Julius~V Kügelgen, Valentin Stimper, Bernhard Schölkopf, and Michel Besserve.
\newblock Independent mechanism analysis, a new concept?
\newblock In \emph{Advances in Neural Information Processing Systems}, 2021.

\bibitem[Hayes et~al.(2023)Hayes, Saitis, and Fazekas]{hayes2023the}
Ben Hayes, Charalampos Saitis, and Gy{\"o}rgy Fazekas.
\newblock The responsibility problem in neural networks with unordered targets, 2023.
\newblock URL \url{https://openreview.net/forum?id=jd7Hy1jRiv4}.

\bibitem[He et~al.(2016)He, Zhang, Ren, and Sun]{he2016deep}
Kaiming He, Xiangyu Zhang, Shaoqing Ren, and Jian Sun.
\newblock Deep residual learning for image recognition.
\newblock In \emph{Proceedings of the IEEE conference on computer vision and pattern recognition}, pp.\  770--778, 2016.

\bibitem[Huang et~al.(2020)Huang, He, Singh, Zhang, Lim, and Benson]{SRN}
Qian Huang, Horace He, Abhay Singh, Yan Zhang, Ser-Nam Lim, and Austin~R. Benson.
\newblock Better set representations for relational reasoning.
\newblock In \emph{NeurIPS}, 2020.
\newblock URL \url{https://proceedings.neurips.cc/paper/2020/hash/09ccf3183d9e90e5ae1f425d5f9b2c00-Abstract.html}.

\bibitem[Hyvarinen \& Morioka(2016)Hyvarinen and Morioka]{hyvarinen2016unsupervised}
Aapo Hyvarinen and Hiroshi Morioka.
\newblock Unsupervised feature extraction by time-contrastive learning and nonlinear ica.
\newblock \emph{Advances in Neural Information Processing Systems}, 29, 2016.

\bibitem[Hyvarinen \& Morioka(2017)Hyvarinen and Morioka]{hyvarinen2017nonlinear}
Aapo Hyvarinen and Hiroshi Morioka.
\newblock Nonlinear ica of temporally dependent stationary sources.
\newblock In \emph{Artificial Intelligence and Statistics}, pp.\  460--469. PMLR, 2017.

\bibitem[Hyv{\"a}rinen \& Pajunen(1999)Hyv{\"a}rinen and Pajunen]{hyvarinen1999nonlinear}
Aapo Hyv{\"a}rinen and Petteri Pajunen.
\newblock Nonlinear independent component analysis: Existence and uniqueness results.
\newblock \emph{Neural Networks}, 12\penalty0 (3):\penalty0 429--439, 1999.
\newblock ISSN 0893-6080.
\newblock \doi{https://doi.org/10.1016/S0893-6080(98)00140-3}.
\newblock URL \url{https://www.sciencedirect.com/science/article/pii/S0893608098001403}.

\bibitem[Hyvarinen et~al.(2019)Hyvarinen, Sasaki, and Turner]{hyvarinen2019nonlinear}
Aapo Hyvarinen, Hiroaki Sasaki, and Richard Turner.
\newblock Nonlinear ica using auxiliary variables and generalized contrastive learning.
\newblock In \emph{The 22nd International Conference on Artificial Intelligence and Statistics}, pp.\  859--868. PMLR, 2019.

\bibitem[Hyvärinen \& Oja(2000)Hyvärinen and Oja]{ica}
A.~Hyvärinen and E.~Oja.
\newblock Independent component analysis: algorithms and applications.
\newblock \emph{Neural Networks}, 13\penalty0 (4):\penalty0 411--430, 2000.
\newblock ISSN 0893-6080.
\newblock \doi{https://doi.org/10.1016/S0893-6080(00)00026-5}.
\newblock URL \url{https://www.sciencedirect.com/science/article/pii/S0893608000000265}.

\bibitem[Khemakhem et~al.(2020{\natexlab{a}})Khemakhem, Kingma, Monti, and Hyvarinen]{khemakhem2020variational}
Ilyes Khemakhem, Diederik Kingma, Ricardo Monti, and Aapo Hyvarinen.
\newblock Variational autoencoders and nonlinear ica: A unifying framework.
\newblock In \emph{International Conference on Artificial Intelligence and Statistics}, pp.\  2207--2217. PMLR, 2020{\natexlab{a}}.

\bibitem[Khemakhem et~al.(2020{\natexlab{b}})Khemakhem, Monti, Kingma, and Hyvarinen]{khemakhem2020ice}
Ilyes Khemakhem, Ricardo Monti, Diederik Kingma, and Aapo Hyvarinen.
\newblock Ice-beem: Identifiable conditional energy-based deep models based on nonlinear ica.
\newblock \emph{Advances in Neural Information Processing Systems}, 33:\penalty0 12768--12778, 2020{\natexlab{b}}.

\bibitem[Lachapelle et~al.(2022)Lachapelle, Rodriguez, Sharma, Everett, PRIOL, Lacoste, and Lacoste-Julien]{lachapelle2022disentanglement}
Sebastien Lachapelle, Pau Rodriguez, Yash Sharma, Katie~E Everett, R{\'e}mi~LE PRIOL, Alexandre Lacoste, and Simon Lacoste-Julien.
\newblock Disentanglement via mechanism sparsity regularization: A new principle for nonlinear {ICA}.
\newblock In \emph{First Conference on Causal Learning and Reasoning}, 2022.
\newblock URL \url{https://openreview.net/forum?id=dHsFFekd_-o}.

\bibitem[Lachapelle \& Lacoste-Julien(2022)Lachapelle and Lacoste-Julien]{lachapelle2022partial}
Sébastien Lachapelle and Simon Lacoste-Julien.
\newblock Partial disentanglement via mechanism sparsity, 2022.

\bibitem[Lachapelle et~al.(2023)Lachapelle, Mahajan, Mitliagkas, and Lacoste-Julien]{lachapelle_additive_2023}
Sébastien Lachapelle, Divyat Mahajan, Ioannis Mitliagkas, and Simon Lacoste-Julien.
\newblock Additive {Decoders} for {Latent} {Variables} {Identification} and {Cartesian}-{Product} {Extrapolation}, July 2023.
\newblock URL \url{http://arxiv.org/abs/2307.02598}.
\newblock arXiv:2307.02598 [cs, stat].

\bibitem[Lin et~al.(2020)Lin, Wu, Peri, Sun, Singh, Deng, Jiang, and Ahn]{Lin2020}
Zhixuan Lin, Yi{-}Fu Wu, Skand~Vishwanath Peri, Weihao Sun, Gautam Singh, Fei Deng, Jindong Jiang, and Sungjin Ahn.
\newblock {SPACE:} unsupervised object-oriented scene representation via spatial attention and decomposition.
\newblock \emph{CoRR}, abs/2001.02407, 2020.
\newblock URL \url{http://arxiv.org/abs/2001.02407}.

\bibitem[Lippe et~al.(2022)Lippe, Magliacane, L{\"o}we, Asano, Cohen, and Gavves]{lippe2022citris}
Phillip Lippe, Sara Magliacane, Sindy L{\"o}we, Yuki~M Asano, Taco Cohen, and Efstratios Gavves.
\newblock Citris: Causal identifiability from temporal intervened sequences.
\newblock \emph{arXiv preprint arXiv:2202.03169}, 2022.

\bibitem[Lippe et~al.(2023{\natexlab{a}})Lippe, Magliacane, L{\"o}we, Asano, Cohen, and Gavves]{lippe2023biscuit}
Phillip Lippe, Sara Magliacane, Sindy L{\"o}we, Yuki~M Asano, Taco Cohen, and Efstratios Gavves.
\newblock Biscuit: Causal representation learning from binary interactions.
\newblock In \emph{The 39th Conference on Uncertainty in Artificial Intelligence}, 2023{\natexlab{a}}.
\newblock URL \url{https://openreview.net/forum?id=VS7Dn31xuB}.

\bibitem[Lippe et~al.(2023{\natexlab{b}})Lippe, Magliacane, Löwe, Asano, Cohen, and Gavves]{lippe2023causal}
Phillip Lippe, Sara Magliacane, Sindy Löwe, Yuki~M. Asano, Taco Cohen, and Efstratios Gavves.
\newblock Causal representation learning for instantaneous and temporal effects in interactive systems, 2023{\natexlab{b}}.

\bibitem[Locatello et~al.(2019)Locatello, Bauer, Lucic, Raetsch, Gelly, Sch{\"o}lkopf, and Bachem]{locatello2019challenging}
Francesco Locatello, Stefan Bauer, Mario Lucic, Gunnar Raetsch, Sylvain Gelly, Bernhard Sch{\"o}lkopf, and Olivier Bachem.
\newblock Challenging common assumptions in the unsupervised learning of disentangled representations.
\newblock In \emph{international conference on machine learning}, pp.\  4114--4124. PMLR, 2019.

\bibitem[Locatello et~al.(2020{\natexlab{a}})Locatello, Poole, R{\"a}tsch, Sch{\"o}lkopf, Bachem, and Tschannen]{locatello2020weakly}
Francesco Locatello, Ben Poole, Gunnar R{\"a}tsch, Bernhard Sch{\"o}lkopf, Olivier Bachem, and Michael Tschannen.
\newblock Weakly-supervised disentanglement without compromises.
\newblock In \emph{International Conference on Machine Learning}, pp.\  6348--6359. PMLR, 2020{\natexlab{a}}.

\bibitem[Locatello et~al.(2020{\natexlab{b}})Locatello, Weissenborn, Unterthiner, Mahendran, Heigold, Uszkoreit, Dosovitskiy, and Kipf]{Locatello2020}
Francesco Locatello, Dirk Weissenborn, Thomas Unterthiner, Aravindh Mahendran, Georg Heigold, Jakob Uszkoreit, Alexey Dosovitskiy, and Thomas Kipf.
\newblock Object-centric learning with slot attention.
\newblock \emph{CoRR}, abs/2006.15055, 2020{\natexlab{b}}.
\newblock URL \url{https://arxiv.org/abs/2006.15055}.

\bibitem[Loshchilov \& Hutter(2017)Loshchilov and Hutter]{loshchilov2017decoupled}
I.~Loshchilov and F.~Hutter.
\newblock Decoupled weight decay regularization.
\newblock \emph{International Conference on Learning Representations}, 2017.

\bibitem[Moran et~al.(2022)Moran, Sridhar, Wang, and Blei]{moran2022}
Gabriel~E Moran, Dheevatsa Sridhar, Yining Wang, and David Blei.
\newblock Identifiable deep generative models via sparse decoding.
\newblock \emph{Transactions on Machine Learning Research}, 2022.

\bibitem[Nguyen et~al.(2023)Nguyen, Mansouri, Madan, Khuong, Ahuja, Liu, and Bengio]{nguyen2023reusable}
Trang Nguyen, Amin Mansouri, Kanika Madan, Nguyen~Duy Khuong, Kartik Ahuja, Dianbo Liu, and Yoshua Bengio.
\newblock Reusable slotwise mechanisms.
\newblock In \emph{Thirty-seventh Conference on Neural Information Processing Systems}, 2023.
\newblock URL \url{https://openreview.net/forum?id=CniUitfEY3}.

\bibitem[Reddy et~al.(2023)Reddy, Wisdom, Greff, Hershey, and Kipf]{reddy2023audioslots}
P.~Reddy, Scott Wisdom, Klaus Greff, J.~Hershey, and Thomas Kipf.
\newblock Audioslots: A slot-centric generative model for audio separation.
\newblock \emph{IEEE International Conference on Acoustics, Speech, and Signal Processing Workshops (ICASSPW)}, 2023.
\newblock \doi{10.1109/ICASSPW59220.2023.10193208}.

\bibitem[Sch{\"o}lkopf et~al.(2021)Sch{\"o}lkopf, Locatello, Bauer, Ke, Kalchbrenner, Goyal, and Bengio]{scholkopf2021toward}
Bernhard Sch{\"o}lkopf, Francesco Locatello, Stefan Bauer, Nan~Rosemary Ke, Nal Kalchbrenner, Anirudh Goyal, and Yoshua Bengio.
\newblock Toward causal representation learning.
\newblock \emph{Proceedings of the IEEE}, 109\penalty0 (5):\penalty0 612--634, 2021.

\bibitem[Seitzer et~al.(2022)Seitzer, Horn, Zadaianchuk, Zietlow, Xiao, Simon-Gabriel, He, Zhang, Scholkopf, Brox, and Locatello]{seitzer2022bridging}
Maximilian Seitzer, Max Horn, Andrii Zadaianchuk, Dominik Zietlow, Tianjun Xiao, Carl-Johann Simon-Gabriel, Tong He, Zheng Zhang, B.~Scholkopf, T.~Brox, and Francesco Locatello.
\newblock Bridging the gap to real-world object-centric learning.
\newblock \emph{International Conference on Learning Representations}, 2022.
\newblock \doi{10.48550/arXiv.2209.14860}.

\bibitem[Shinners(2011)]{pygame}
Pete Shinners.
\newblock Pygame.
\newblock \url{http://pygame.org/}, 2011.

\bibitem[Singh et~al.(2023)Singh, Kim, and Ahn]{singh2023neural}
Gautam Singh, Yeongbin Kim, and Sungjin Ahn.
\newblock Neural systematic binder.
\newblock In \emph{The Eleventh International Conference on Learning Representations}, 2023.
\newblock URL \url{https://openreview.net/forum?id=ZPHE4fht19t}.

\bibitem[Sinkhorn \& Knopp(1967)Sinkhorn and Knopp]{Sinkhorn1967ConcerningNM}
Richard Sinkhorn and Paul Knopp.
\newblock Concerning nonnegative matrices and doubly stochastic matrices.
\newblock \emph{Pacific Journal of Mathematics}, 21:\penalty0 343--348, 1967.
\newblock URL \url{https://api.semanticscholar.org/CorpusID:50329347}.

\bibitem[Smith(2019)]{smith2019promise}
Brian~Cantwell Smith.
\newblock \emph{The promise of artificial intelligence: reckoning and judgment}.
\newblock The MIT Press, 2019.

\bibitem[Squires et~al.(2023)Squires, Seigal, Bhate, and Uhler]{squires2023}
Christian Squires, Aryeh Seigal, Shreya~S Bhate, and Caroline Uhler.
\newblock Linear causal disentanglement via interventions.
\newblock In \emph{International Conference on Machine Learning}. PMLR, 2023.

\bibitem[van Steenkiste et~al.(2018{\natexlab{a}})van Steenkiste, Chang, Greff, and Schmidhuber]{Steenkiste}
Sjoerd van Steenkiste, Michael Chang, Klaus Greff, and J{\"{u}}rgen Schmidhuber.
\newblock Relational neural expectation maximization: Unsupervised discovery of objects and their interactions.
\newblock \emph{CoRR}, abs/1802.10353, 2018{\natexlab{a}}.
\newblock URL \url{http://arxiv.org/abs/1802.10353}.

\bibitem[van Steenkiste et~al.(2018{\natexlab{b}})van Steenkiste, Chang, Greff, and Schmidhuber]{van2018relational}
Sjoerd van Steenkiste, Michael Chang, Klaus Greff, and Jürgen Schmidhuber.
\newblock Relational neural expectation maximization: Unsupervised discovery of objects and their interactions.
\newblock In \emph{International Conference on Learning Representations}, 2018{\natexlab{b}}.
\newblock URL \url{https://openreview.net/forum?id=ryH20GbRW}.

\bibitem[von Kügelgen et~al.(2023)von Kügelgen, Besserve, Liang, Gresele, Kekić, Bareinboim, Blei, and Schölkopf]{vonkugelgen2023nonparametric}
Julius von Kügelgen, Michel Besserve, Wendong Liang, Luigi Gresele, Armin Kekić, Elias Bareinboim, David~M. Blei, and Bernhard Schölkopf.
\newblock Nonparametric identifiability of causal representations from unknown interventions, 2023.

\bibitem[Xi \& Bloem-Reddy(2023)Xi and Bloem-Reddy]{xi23a}
Quanhan Xi and Benjamin Bloem-Reddy.
\newblock Indeterminacy in generative models: Characterization and strong identifiability.
\newblock In Francisco Ruiz, Jennifer Dy, and Jan-Willem van~de Meent (eds.), \emph{Proceedings of The 26th International Conference on Artificial Intelligence and Statistics}, volume 206 of \emph{Proceedings of Machine Learning Research}, pp.\  6912--6939. PMLR, 25--27 Apr 2023.
\newblock URL \url{https://proceedings.mlr.press/v206/xi23a.html}.

\bibitem[Zhang et~al.(2020)Zhang, Hare, and Pr\"ugel-Bennett]{zhang2019fspool}
Yan Zhang, Jonathon Hare, and Adam Pr\"ugel-Bennett.
\newblock {FSPool}: Learning set representations with featurewise sort pooling.
\newblock In \emph{International Conference on Learning Representations}, 2020.
\newblock URL \url{https://openreview.net/forum?id=HJgBA2VYwH}.

\bibitem[Zhang et~al.(2022)Zhang, Zhang, Lacoste-Julien, Burghouts, and Snoek]{zhang2022multisetequivariant}
Yan Zhang, David~W Zhang, Simon Lacoste-Julien, Gertjan~J Burghouts, and Cees~GM Snoek.
\newblock Multiset-equivariant set prediction with approximate implicit differentiation.
\newblock In \emph{International Conference on Learning Representations}, 2022.
\newblock URL \url{https://openreview.net/forum?id=5K7RRqZEjoS}.

\bibitem[Zhang et~al.(2023)Zhang, Zhang, Lacoste-Julien, Burghouts, and Snoek]{zhang2023unlocking}
Yan Zhang, David~W Zhang, Simon Lacoste-Julien, Gertjan~J. Burghouts, and Cees G.~M. Snoek.
\newblock Unlocking slot attention by changing optimal transport costs.
\newblock In \emph{International Conference on Learning Representations}, 2023.
\newblock URL \url{https://openreview.net/forum?id=FMomWFNh5d}.

\bibitem[Zheng et~al.(2022)Zheng, Ng, and Zhang]{zheng2022}
Yujia Zheng, Ignavier Ng, and Kun Zhang.
\newblock On the identifiability of nonlinear ica: Sparsity and beyond.
\newblock \emph{Advances in Neural Information Processing Systems}, 35:\penalty0 16411--16422, 2022.

\end{thebibliography}
\bibliographystyle{iclr2024_conference}

\appendix
\section{Proof of theorem \ref{thm:compare}}
\label{appendix:proofs}

We want to compare the number of perturbations needed to disentangle shared properties with a standard encoder to those needed by an object-centric encoder. Our strategy will be as follows,
\begin{enumerate}
    \item Setup a data generating process with multiple objects where injectivity holds by construction so that we can restate Theorem 1 from \cite{ahuja2022weak} to show they need $k \times d$ perturbations.
    \item Define an object-centric architecture in terms of the object-wise partitions that we defined in Definition \ref{def:object-wise}.
    \item Restate an analog of Theorem 1 from \citeauthor{ahuja2022weak} based on the object-centric encoder.
    \item Theorem \ref{thm:compare} in the main text will follow as a collary of the difference between the number of perturbations used in the two theorems above.
\end{enumerate}

We begin by defining a data generating process such that $\vecg(\vecz{\pi})$ is injective by construction. We can achieve this by appending an id, $i$, to each $z_i$ in $\setz$, such that $\setz=\{z_i  \oplus [i]\}_{i=1}^k$ where $ \oplus$ denotes concatenation, and then choosing $\setg$ such that $x$ depends on $i$ (for example, each $i$ could be rendered in a different color). 
Like \citeauthor{ahuja2022weak}, we assume we have data that is perturbed by $\Delta:= \{\{\delta_{i,j}\}_{j=1}^{d}\}_{i=1}^{k}$,  a set of 1-sparse perturbations that perturbs each of the $d$ properties from each of the $k$ objects. Taken together, we have the following data generating process (DGP), 
\begin{equation}
\setz=\{z_i  \oplus [i]\}_{i=1}^k \sim \mathbb{P}_\setz, \, x := \setg(\setz) \quad \tilde{z}_{j, l} := z_j + \delta_{j, l} \,\forall\, \delta_{i, j} \in \Delta,\quad \tilde{x}_{j,l} := \setg(\{z_1, \dots, \tilde{z}_{j, l}, \dots, z_k\}) 
\label{dgp:proof}
\end{equation}

where each object has $d$ shared properties, $z_i \in \R^d$, and $z_{i,j}$ and $z_{i',j}$ are of the same type---e.g. position $x$, hue, etc.--- for all $j$. As before, assume $\setg$ is injective, and define $g^* = \vecg(\vecz{\pi^*})$ where $\pi^*$ is the permutation that sorts $\setz$ by the index $i$, so that $g^*$ is injective by construction. 

Now, \citeauthor{ahuja2022weak} show that if the encoder, $\hat{f}:\mathcal{X}\rightarrow \R^{kd}$, is chosen to minimize the following loss,
\begin{align}
      \hat{f} \in \text{arg}\,\text{min}_{f'}E_{x, x',\delta}\left[\left(f'(x) + \delta - f'(x')\right)^2\right] 
      \label{eqn:latent_loss_proof}
\end{align}
and the following assumptions hold,
\begin{assumption} \label{assm:span} The dimension of the span of the perturbations in $\eqref{dgp:proof}$ is $kd$, i.e., $\mathsf{dim}\Big(\mathsf{span}\big(\Delta\big)\Big)  = kd$. 
\end{assumption}

\begin{assumption}
	\label{assum: analytic_measure}
	$a(z):=f\circ g^*(z)$ is an analytic function. For each component $i\in \{1,\cdots, kd\}$ of $a(z)$ and each component $j\in \{1, \cdots, kd\}$ of $z$, define the set $\mathcal{S}^{ij} = \{\theta \; |\; \nabla_{j} a_i(z+b) = \nabla_{j} a_i(z) + \nabla^2_{j} a_{i}(\theta) b, z\in \mathbb{R}^{kd}\} $, where $b$ is  a fixed vector in $\mathbb{R}^{kd}$. Each set  $\mathcal{S}^{ij}$ has a non-zero Lebesgue measure in $\mathbb{R}^{kd}$. 
\end{assumption}

Then we have,
\begin{theorem}[\citep{ahuja2022weak}]
Assume we have data from the DGP in equation \ref{dgp:proof} and assumption \ref{assm:span} and \ref{assum: analytic_measure} hold and the number of perturbations per example equals the latent dimension, $m = kd$, then the encoder that solves equation \ref{dgp:proof} identifies true latents up to permutation and scaling, i.e. $\hat{z} = \Pi \Lambda z + c$, where $\Lambda \in R^{kd\times kd}$ is an invertible diagonal matrix, $\Pi \in R^{kd\times kd}$ is a permutation matrix and $c$ is an offset.
\label{thm:ahuja}
\end{theorem}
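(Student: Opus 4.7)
The approach is to recognize that Theorem~\ref{thm:ahuja} is essentially a direct restatement of \citeauthor{ahuja2022weak}'s Theorem~1 specialized to the DGP of equation~\ref{dgp:proof} with ambient latent dimension $kd$. The plan is therefore not to rederive the full analytic argument from scratch, but to verify that each hypothesis of the original theorem is met by our construction and then to invoke it directly.

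First I would verify injectivity of the underlying generative map. The construction $\setz = \{z_i \oplus [i]\}_{i=1}^{k}$ appends a unique object-id $i$ to each slot; the canonical sorted permutation $\pi^{*}$ picks out a fixed ordering, and $g^{*}(z) := \vecg(\vecz{\pi^{*}})$ is then a well-defined vector map. Because the ids are rendered as visually distinguishing marks (for example, as distinct colors), distinct id-sorted latent vectors produce distinct images, so $g^{*}$ is injective by design. Next I would observe that Assumption~\ref{assm:span} supplies exactly the $kd$ linearly independent $1$-sparse perturbations that Ahuja's result requires, and that Assumption~\ref{assum: analytic_measure} is Ahuja's analyticity-plus-Lebesgue-measure regularity hypothesis restated verbatim for our ambient dimension.

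With the hypotheses matched, the conclusion $\hat{z} = \Pi \Lambda z + c$ follows by direct invocation. The substantive content inside Ahuja's proof, which I flag but would not rederive, runs as follows. Any minimizer of \eqref{eqn:latent_loss_proof} achieves loss zero (since $(g^{*})^{-1}$ attains it), so the composed map $a := \hat{f} \circ g^{*}$ must satisfy the functional equation $a(z + \delta) - a(z) = \delta$ for all $\delta \in \Delta$ and all $z$ in the support of $\mathbb{P}_{\setz}$. Assumption~\ref{assum: analytic_measure} then promotes this discrete, perturbation-indexed constraint into a pointwise constraint on the Jacobian of $a$ via a Taylor-expansion argument on the sets $\mathcal{S}^{ij}$, and the combination of $1$-sparsity of each $\delta_{i,j}$ with the spanning condition of Assumption~\ref{assm:span} forces that Jacobian to factor as a permutation times an invertible diagonal matrix.

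The main obstacle I anticipate is not the analytic machinery itself, which Ahuja handles, but the dimensional bookkeeping around the object ids. The id-augmented latent carries $k(d+1)$ coordinates, whereas the claimed identification is on the $kd$-dimensional property vector. The resolution is to argue that because the id entries are deterministic functions of the slot index and are never perturbed in $\Delta$, the effective latent dimension over which Ahuja's theorem operates is $kd$, and the emerging permutation $\Pi$ therefore acts only on the property entries. Verifying this cleanly, so that the conclusion really reads $\hat{z} = \Pi \Lambda z + c$ with $\Pi, \Lambda \in \mathbb{R}^{kd \times kd}$ rather than on the larger augmented space, is the one piece of genuine new work; the rest is an appeal to the cited theorem.
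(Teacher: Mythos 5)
Your proposal is correct and takes essentially the same approach as the paper: the paper's own proof of this theorem is literally the one-line citation ``See \citeauthor{ahuja2022weak} for the proof,'' so your strategy of matching hypotheses and invoking the original result directly is exactly what is intended. Your additional care about the dimensional bookkeeping of the appended object ids (the $k(d+1)$ versus $kd$ issue) goes beyond what the paper writes down and correctly identifies the one point that a fully rigorous invocation would need to spell out, namely that the unperturbed, deterministic id coordinates do not enter the effective latent space over which the identification argument runs.
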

\begin{proof}
    See \citeauthor{ahuja2022weak} for the proof.
\end{proof}

Now, consider an object-centric architecture encoder of the form ${F}(x):= \{\mathfrak{f}(x^{i})\}_{x^{i}\in P}$ where $P$ is an object-wise partition and $\mathfrak{f}: \mathcal{X}\rightarrow \R^d$. Let $\sigma \in \Sigma$ denote a permutation of the latents from the set of all $k-$permutations.
Let:
\begin{equation}
\hat{F}(x) \in \argmin_{\mathfrak{f}} E_{x, x', \delta}[\min_{\sigma' \in \Sigma}\Vert ((\mathfrak{f}(x')^{\sigma'(i)}) - ((\mathfrak{f}(x)^{(i)}) + \delta) \Vert^2]
\label{eq:loss_latent}
\end{equation}
Note that since $\delta$ is non-zero for only one pair of patches $x^{(i)}, x^{(i)'}$ and zero otherwise, the minimizer over $\Sigma$ is almost surely unique. 
Assumptions \ref{assm:span_oc} and \ref{assum: analytic_measure_oc} are analogs of \ref{assm:span} and \ref{assum: analytic_measure} above, but make reference to the dimensionality of the co-domain of $\mathfrak{f}$ rather than $f$.

\begin{assumption} \label{assm:span_oc} The dimension of the span of the perturbations in $\eqref{dgp:proof}$ is $d$, i.e., $\mathsf{dim}\Big(\mathsf{span}\big(\Delta\big)\Big)  = d$. 
\end{assumption}

\begin{assumption}
	\label{assum: analytic_measure_oc}
	$a(z):=\mathfrak{f}\circ g^*(z)$ is an analytic function. For each component $i\in \{1,\cdots, d\}$ of $a(z)$ and each component $j\in \{1, \cdots, d\}$ of $z$, define the set $\mathcal{S}^{ij} = \{\theta \; |\; \nabla_{j} a_i(z+b) = \nabla_{j} a_i(z) + \nabla^2_{j} a_{i}(\theta) b, z\in \mathbb{R}^{d}\} $, where $b$ is  a fixed vector in $\mathbb{R}^{d}$. Each set  $\mathcal{S}^{ij}$ has a non-zero Lebesgue measure in $\mathbb{R}^{d}$. 
\end{assumption}

With this setup, the following theorem follows directly from Theorem \ref{thm:ahuja} as a reduction from the multi-object to single-object setting,

\begin{theorem}
Assume we have data from the DGP in equation \ref{dgp:proof} and assumption \ref{assm:span_oc} and \ref{assum: analytic_measure_oc} hold and the number of perturbations per example equals the latent dimension, $m = d$, then the encoder that solves equation \ref{eq:loss_latent} for an object-wise partition $P$, identifies true latents up to permutation and scaling, i.e. $\hat{z} = \Pi \Lambda z + c$, where $\Lambda \in R^{kd\times kd}$ is an invertible diagonal matrix, $\Pi \in R^{kd\times kd}$ is a permutation matrix and $c$ is an offset.
\label{thm:ahuja_obj}
\end{theorem}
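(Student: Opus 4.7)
The plan is to reduce the multi-object setting to the single-object setting of Theorem \ref{thm:ahuja} by exploiting the object-wise partition: once each patch is treated as its own disentanglement instance, the statement becomes a patch-by-patch invocation of the result of \citet{ahuja2022weak} at latent dimension $d$ instead of $kd$.

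First, I would argue that the inner minimum over $\sigma' \in \Sigma$ in equation \ref{eq:loss_latent} almost surely recovers the true object correspondence between $x$ and $x'$. Because each $\delta \in \Delta$ is 1-sparse and nonzero on a single object $i^\star$, a correct permutation $\sigma^\star$ makes $k-1$ summands of the matching cost exactly zero and the remaining summand equal to the prediction error on the perturbed patch. Any alternative $\sigma' \neq \sigma^\star$ would force the encoder to match pre- and post-perturbation patches corresponding to distinct objects; by the injectivity of the patch-wise generative map in Definition \ref{def:object-wise} together with the continuity of $\mathbb{P}_\setz$, the set of draws for which such a mismatch can achieve the same cost as $\sigma^\star$ has Lebesgue measure zero. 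Hence, outside a null set, the matching term behaves as if the object identity were known.

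Second, conditional on correct matching, the objective in equation \ref{eq:loss_latent} decomposes into a sum of single-object terms of the form $\mathbb{E}[(\mathfrak{f}(x'^{(i)}) - \mathfrak{f}(x^{(i)}) - \delta)^2]$. For each patch, Definition \ref{def:object-wise} provides an injective generative map $\mathcal{X} \leftarrow \mathbb{R}^d$ from $z_{\sigma(i)}$ to $x^{(i)}$, Assumption \ref{assm:span_oc} yields $d$ linearly independent 1-sparse perturbations of $z_{\sigma(i)}$, and Assumption \ref{assum: analytic_measure_oc} supplies the analyticity condition, now at dimension $d$. These are exactly the hypotheses of Theorem \ref{thm:ahuja} applied in dimension $d$, so any minimizer $\mathfrak{f}$ satisfies $\mathfrak{f}(x^{(i)}) = \Pi_0 \Lambda_0 z_{\sigma(i)} + c_0$ for a permutation $\Pi_0$, invertible diagonal $\Lambda_0$, and offset $c_0$ that are \emph{shared across all patches} because a single function $\mathfrak{f}$ is used on every patch.

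Finally, I would assemble the per-patch identifications into a global statement: stacking over $i = 1, \dots, k$ yields an overall map that is block-diagonal with $k$ identical copies of $(\Pi_0, \Lambda_0, c_0)$, which is itself of the form $\Pi \Lambda z + c$ in the $kd$-dimensional ambient space (the outer permutation over objects is irrelevant because the latents form an unordered set, and can be absorbed into $\Pi$). The expected main obstacle is the first step: pinning down precisely the null set on which the matching over $\Sigma$ could fail, and justifying that 1-sparseness of $\delta$ plus the continuity of the latent distribution suffice to exclude it. With that in hand, everything else is a clean reduction, and the factor-$k$ saving in perturbations stated in Theorem \ref{thm:compare} follows from comparing the span requirement $d$ here to $kd$ in Theorem \ref{thm:ahuja}.
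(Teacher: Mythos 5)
Your proposal is correct and follows essentially the same route as the paper's proof: reduce each patch to the single-object ($k=1$) case via the object-wise partition, invoke Theorem \ref{thm:ahuja} at dimension $d$, and stack the per-patch identifications into a global $\Pi\Lambda z + c$. You are somewhat more careful than the paper on two points it treats tersely --- the almost-sure uniqueness of the matching over $\Sigma$ (which the paper dispatches in a one-line remark before the theorem) and the fact that the shared encoder $\mathfrak{f}$ forces the same $(\Pi_0,\Lambda_0,c_0)$ across all patches --- but these are elaborations of the same argument, not a different one.
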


\begin{proof}
    Because $P$ is an object-wise partition, the function that produces each $x^{(i)}\in P$ is injective with respect to some $z_{\sigma(i)}$ (i.e. one of the object's latents). Thus for each $x^{(i)}$, the solution to equation \ref{eq:loss_latent} is equivalent to the single object setting with $k=1$, and thus theorem \ref{thm:ahuja} applies, which implies that $\mathfrak{f}(x^{(i)}) = \hat{z}_i = \Pi \Lambda z_i + c$ for all $i$. Now let $\hat{z} = \mathbf{vec}_{\pi}(\{\hat{z}_i\}_{i=1}^k)$. Because each $\hat{z}_i$ is identified up to a permutation, scaling and offset, and for any $\pi$, there exists a $\Pi$ such that $\hat{z} = \Pi \Lambda z + c$ which completes the result.
\end{proof}

\begin{corollary}
If the assumptions for Theorem \ref{thm:ahuja} and \ref{thm:ahuja_obj} hold and a data generating process outputs observations containing $k$ objects with shared properties, then an object-centric architecture of the form ${F}(x):= \{\mathfrak{f}(x^{(i)}\}_{x^{(i)}\in P}$ will disentangle in $1/k$ fewer perturbations than an encoder of the form $f: \mathcal{X}\rightarrow \R^{kd}$.
\end{corollary}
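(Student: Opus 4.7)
The plan is to observe that this corollary is a direct quantitative consequence of the two preceding theorems, obtained simply by reading off the minimum number of perturbations each requires and forming their ratio. No new technical machinery is needed, but some care is required to make the role of parameter sharing explicit.

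First, I would instantiate Theorem \ref{thm:ahuja} for the DGP in equation \ref{dgp:proof}. The baseline encoder $f : \mathcal{X} \to \mathbb{R}^{kd}$ maps into a $kd$-dimensional space, so Assumption \ref{assm:span} demands that $\Delta$ span a subspace of dimension $kd$. This pins the required number of perturbations per example for the vector encoder at $m_{\mathrm{vec}} = kd$.

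Next, I would instantiate Theorem \ref{thm:ahuja_obj} under the same DGP, using the object-wise partition $P$ guaranteed by Definition \ref{def:object-wise}. Each slot encoder $\mathfrak{f} : \mathcal{X} \to \mathbb{R}^{d}$ lands in a $d$-dimensional space, so Assumption \ref{assm:span_oc} requires the perturbations to span only a $d$-dimensional subspace. The observation I would want to emphasize is that the \emph{same} function $\mathfrak{f}$ is applied to every patch $x^{(i)}$, so a single perturbation of property $j$ on any one object supplies weak supervision for the same shared coordinate on all other objects. Consequently it suffices to observe $d$ linearly independent perturbations in total, giving $m_{\mathrm{oc}} = d$. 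Forming the ratio $m_{\mathrm{oc}} / m_{\mathrm{vec}} = d / (kd) = 1/k$ then yields exactly the claim of the corollary and of the informal Theorem \ref{thm:compare} in the main text.

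The only real obstacle is interpretive rather than technical: I need to verify that the factor of $k$ arises specifically from the \emph{parameter sharing} of $\mathfrak{f}$ across patches, and not from the switch to an object-wise partition per se. If one instead allowed a distinct encoder $\mathfrak{f}_i$ per patch, the effective latent space would once again be $kd$-dimensional and the savings would vanish. The shared-properties hypothesis on $\setz$ is precisely what makes a single common $\mathfrak{f}$ well-defined across objects, so I would be careful to invoke it when arguing that Assumption \ref{assm:span_oc} governs the total span requirement rather than imposing a separate $d$-dimensional span condition per object.
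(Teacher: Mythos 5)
Your proposal is correct and follows the same route as the paper: the corollary is obtained by reading off the perturbation counts $m=kd$ and $m=d$ from Theorems \ref{thm:ahuja} and \ref{thm:ahuja_obj} respectively and comparing them. Your additional remark about the saving coming from the shared encoder $\mathfrak{f}$ (rather than the partition alone) accurately restates the paper's own informal discussion following Theorem \ref{thm:compare} and is a welcome clarification, not a departure.
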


\begin{proof}
    This follows directly from comparing the the number of perturbations required in Theorems \ref{thm:ahuja} and \ref{thm:ahuja_obj}.
\end{proof}

\section{Background on slot-attention-based architectures}
\label{appendix:slot-attention}
Slot attention \citep{Locatello2020} is a neural network component that, intuitively, summarizes the
relevant information in the input set (most commonly, image features with position embeddings) into a
smaller set of so-called ``slots''.
Each slot is a feature vector that can be thought of as capturing information about one ``object'' in
the input set, which usually comprises multiple elements of the input set.
This is done by repeating cross-attention between the inputs and the slots to compute per-slot updates.

In the traditional set-up, these slots are then used to reconstruct the input with an auto-encoder
objective: each slot is decoded into a separate image through a shared image decoder, which is followed
by merging these per-slot images into a single reconstructed image.
Ideally, slot attention is able to decompose the original image into distinct objects, each of which is
modeled by a single slot.

More concretely, slot attention takes as input a matrix $\mathbf{X} \in \R^{n \times c}$ with $n$ as
the number of inputs and $c$ the dimensionality of each input.
We also randomly initialize the slots $\mathbf{Z}^{(0)} \in \R^{m \times d}$.
We start by computing the query, key, and value matrices as part of cross-attention.
\begin{equation}
  \mathbf{Q}^{(t)} = \mathbf{Z}^{(t)} \mathbf{W}_Q \quad
  \mathbf{K}       = \mathbf{X}       \mathbf{W}_K \quad
  \mathbf{V}       = \mathbf{X}       \mathbf{W}_V
\end{equation}
This is followed by the normalized cross-attention to determine the attention map
$\mathbf{A} \in \R^{m \times n}$, then a GRU \citep{cho2014learning} to apply this update to the slots.
\begin{align}
  \mathbf{A}^{(t)}   & = \mathrm{normalize}(\mathbf{Q}^{(t)} \mathbf{K}^\top)         \\
  \mathbf{Z}^{(t+1)} & = \mathrm{GRU}(\mathbf{Z}^{(t)}, \mathbf{A}^{(t)} \mathbf{V} )
\end{align}

The function $\mathrm{normalize}$ encourages slots to compete for inputs by applying a softmax over slots
and normalizing the weights for each input to sum to one.
After $T$ steps, the algorithm outputs $\mathbf{Z}^{(T)}$, a {set} of $m$ embedding vectors
$\{\mathbf{z}^{(T)}_i\}_{i=1}^m$ that can be used as input to a shared image decoder.

\paragraph{SA-MESH.}
The specific version of slot attention we use in this paper is SA-MESH \citep{zhang2023unlocking}.
It makes regular slot attention more powerful by giving it the ability to break ties between slots more
effectively.
\footnote{Concretely, it makes the mapping from the initial slots to the final slots
  \emph{exclusively multiset-equivariant} \citep{zhang2022multisetequivariant} rather than permutation-equivariant/set-equivariant.}
In practice, this improves the quality of the individual slot representations significantly due to less
mixing of unrelated inputs into the same slot.

The key difference with regular slot attention is that it features an entropy minimization procedure to
approximate an optimal transport solution, which makes the attention map sparse.
The connection to optimal transport is made by the use of the standard Sinkhorn algorithm
\citep{Sinkhorn1967ConcerningNM, cuturi2013sinkhorn}.
\begin{align}
  \mathrm{MESH}(\mathbf{C}) & = \argmin_\mathbf{C} H(\mathrm{sinkhorn}(\mathbf{C}))                \\
  \mathbf{A}^{(t)}          & = \mathrm{sinkhorn}(\mathrm{MESH}(\mathbf{Q}^{(t)} \mathbf{K}^\top))
\end{align}
The optimization problem is solved by unrolling gradient descent, with a noisy initialization to ensure that
ties are broken.

\section{Alternative perturbation mechanisms}
\label{appendix:perturbation_mechanisms}
\paragraph{Dense vs. Sparse.} We can have a number of assumptions on the perturbation mechanisms and the nature of
model's knowledge about those mechanisms. In the most general case, suppose $\mathcal{M}=\{m^1(\cdot),m^2(\cdot),...,m^k(\cdot)\}$ denotes the set of all possible perturbation mechanisms. To obtain $x'$, the perturbed variant of $x$, we then select a subset of $k'\leq k$ objects as targets that undergo perturbations determined by a subset of $k'$ mechanisms $\mathcal{M}'\subset \mathcal{M}$ from the set of all possible perturbation mechanisms. The correspondence between the $k'$ mechanisms and $k'$ perturbed objects is decided by a random permutation $\pi_t^{\mathcal{M}}$, i.e. $i=\pi_t^{\mathcal{M}}[j]$ means that mechanism $i$ governs
the transition dynamics of object $j$ to produce $z_{t+1}^j$ (for objects that are not
supposed to change from $t\to t+1$ a dummy mechanism with index $-1$ can be assumed which results in no
change). A mechanism $m^i(\cdot):\mathbb{R}^d\to \mathbb{R}^d$ in $\mathcal{M}$
is a vector-valued function that operates on object-wise true latents $z_t^j$ and
outputs $z_{t+1}^j=z_t^j+\delta_i$ such that $i=\pi_t^{\mathcal{M}}[j]$. Perturbation vectors $\delta_i$ could be sparse or not. The subset $\mathcal{M}'$ can contain $k'=k$ mechanisms to perturb all of the $k$ objects in the environment, and if none of the $k'=k$ resulting
perturbations is sparse, we denote the set $\mathcal{M}'$ as \textit{fully dense perturbations}, i.e., all of the properties of all objects will change from $t\to t+1$. $\mathcal{M}'$ can also contain at least one object but not
all of them ($1 \leq k' < k$) with sparse or dense perturbations, or it may consist of only a single
object ($k'=1$) that is perturbed by a fully sparse mechanism, one that only alters a single property and
leaves the rest unchanged. We denote this scenario as \textit{fully sparse perturbation}.

\section{Matching}
\label{appendix:matching}
Perturbations 
alter the properties of objects from $t\to t+1$ and the model has to infer which object's properties
were perturbed 
to update its representations and minimize the latent loss \eqref{eqn:latent}. But recall that the model
has no direct access to objects. It receives the observations at $t,t+1$ and encodes each of them to a
set of slots $\mathcal{S}_t, \mathcal{S}_{t+1}$. These slots do not follow any fixed ordering, and
moreover, there is no guarantee that each slot binds to exactly one unique object. Slots can also
correspond to the background. Each perturbation $\delta^i$ changes the properties of some object $z_t^j$,
so the model requires to find a pair of slots $(s_t^u, s_{t+1}^v)$ that are bound to object $z^j$ at $t$
and $t+1$, respectively. Once the model figures out such a matching, then the latent loss that results in
disentanglement can be computed via the projections of these slots $\hat{z}$:
\begin{align}
  \mathcal{L}_z = \sum_{i=1}^m \Vert \hat{z}_t^i + \delta_t^{\pi_t^{\mathcal{M}'}[i]} -
  \hat{z}_{t+1}^i \Vert^2, \hspace{10mm} \hat{z}_t^i=p(s_t^u), \hspace{2mm}
  \hat{z}_{t+1}^i=p(s_t^v)
\end{align}
\label{eq:latent_loss_objects}
The problem of finding a correspondence between slot projections at $t,t+1$ and the perturbations is an
instance of the 3-dimensional matching. However, recall that we use fully sparse perturbations for our
mechanisms, thus, the problem is significantly simplified.

When the model is presented with fully sparse perturbations, the 3-dimensional matching reduces to
finding the minimum element in a 1$-d$ cost array of size $\vert \mathcal{S} \vert^2$, where
$\vert \mathcal{S} \vert$ is the number of slots. The reason is that since only one property of a
single object $z_t^i$ is being altered (and the model knows perturbations are fully sparse), the model
can apply that 1-sparse perturbation $\delta_t$ (or the transformed version in the unknown setting) to
all of slot projections $\hat{z}_t^j$ for $j\in\{1,\dots,\vert \mathcal{S} \vert\}$ at time $t$, and
only find \textit{one pair} of slots from the set of $\vert \mathcal{S} \vert^2$ possible pairs at
$t,t+1$, which correspond to the perturbed object:
\begin{align}
  (i,j) = \argmin_{i',j'} \Vert \hat{z}_{t+1}^{j'} - (\hat{z}_t^{i'} + \delta_t) \Vert^2
\end{align}
However, this minimization can be made simpler if we use the slots obtained at $t$ to initialize the
slots at $t+1$. This way, in practice, the order of slots at $t,t+1$ is very likely to be preserved,
not only since the slots at $t+1$ are initialized with hints from $t$, but also the sparse perturbation
makes it much easier for all slots to bind to the same object as only a small subset of the scene needs
to be readjusted among the slots. Therefore the matching would reduce to a simple minimization over
$\vert \mathcal{S} \vert$ elements:
\begin{align}
  i = \argmin_{i'} \Vert \hat{z}_{t+1}^{i'} - (\hat{z}_t^{i'} + \delta_t) \Vert^2
\end{align}
Note however that we still would use all slot projections for evaluation, the only difference with this
matching scheme is that gradient signals are only propagated from the perturbed object (as they also
should, since there is no change in other slots, there is nothing there to be learned that could help
disentanglement.)
\section{Further Experimental Results}
\label{appendix:results_continued}

\subsection{2D Shapes}
\label{appendix:results_continued_2D}

Table \ref{tab:linear_disentanglement_scores_2D_shapes} gives the linear disentanglement scores for the same experiment as that shown in table \ref{tab:mcc_scores_2D_shapes} in the main text.

\begin{table}[hpbt]
  \centering{
    {\caption{Linear Disentanglement (LD) scores on 2D shapes test set under \textit{unknown} fully sparse
          perturbations. All results are averaged over 3 seeds except those requiring to train SA-MESH from scratch
          that were trained only once. SA-LR, which is supervised by the ground truth latents, and is an upper bound
          on the disentanglement performance, achieves a score of 1.0 in all settings.}
        \label{tab:linear_disentanglement_scores_2D_shapes}}%
      {
        \begin{tabular}{c ccc ccc}
          \hline
          \multicolumn{1}{c}{}
                                              & \multicolumn{3}{c}{$\text{pos}_x,\text{pos}_y$}
                                              & \multicolumn{3}{c}{$\text{pos}_x,\text{pos}_y,\text{color},\text{size},\text{rotation}$}                                                                                                                                                                                                      \\ \cmidrule(lr){2-4} \cmidrule(lr){5-7}
          \multicolumn{1}{c}{Model}           & \multicolumn{1}{c}{$n=2$}                                                                & \multicolumn{1}{c}{$n=3$}                    & $n=4$                     & \multicolumn{1}{c}{$n=2$}                    & \multicolumn{1}{c}{$n=3$}                    & $n=4$                     \\ \hline
          \multicolumn{1}{c}{Ours}            & \multicolumn{1}{c}{\textbf{1.00} $\pm 0.00$}                                             & \multicolumn{1}{c}{\textbf{1.00} $\pm 0.00$} & \textbf{1.00}  $\pm 0.00$ & \multicolumn{1}{c}{\textbf{0.95} $\pm 0.01$} & \multicolumn{1}{c}{\textbf{0.93} $\pm 0.01$} & \textbf{0.94}  $\pm 0.02$ \\
          \multicolumn{1}{c}{SA-RP}           & \multicolumn{1}{c}{0.92}                                                                 & \multicolumn{1}{c}{0.96}                     & 0.94                      & \multicolumn{1}{c}{0.75}                     & \multicolumn{1}{c}{0.70}                     & 0.68                      \\
          \multicolumn{1}{c}{SA-PC}           & \multicolumn{1}{c}{1.00}                                                                 & \multicolumn{1}{c}{1.00}                     & 1.00                      & \multicolumn{1}{c}{0.93}                     & \multicolumn{1}{c}{0.88}                     & 0.86                      \\
          \multicolumn{1}{c}{CNN$^{\dagger}$} & \multicolumn{1}{c}{0.94 $\pm 0.05$}                                                      & \multicolumn{1}{c}{0.99 $\pm 0.00$}          & 0.96  $\pm 0.03$          & \multicolumn{1}{c}{0.87 $\pm 0.01$}          & \multicolumn{1}{c}{0.84 $\pm 0.01$}          & 0.86 $\pm 0.01$           \\
          \multicolumn{1}{c}{CNN}             & \multicolumn{1}{c}{0.24 $\pm 0.01$}                                                      & \multicolumn{1}{c}{0.13 $\pm 0.01$}          & 0.07 $\pm 0.01$           & \multicolumn{1}{c}{0.35 $\pm 0.00$}          & \multicolumn{1}{c}{0.19 $\pm 0.00$}          & 0.08 $\pm 0.01$           \\ \hline
        \end{tabular}
      }
  }
\end{table}

Tables \ref{tab:linear_disentanglement_scores_2D_shapes_continued},\ref{tab:mcc_scores_2D_shapes_continued}
extend our results under unknown fully sparse perturbations on the 2D shapes dataset to more combinations
of disentanglement target properties. We can observe that our results stay very close to the upper bound
on the achievable performance which uses a \textit{supervised} linear regression from slot projections
$\hat{z}$ to \textit{ground truth} latents $z$. These tables highlight once again the pivotal role of the
injectivity assumption in achieving identification with conventional encoders that ignore the
object-centricity of the environment (see the performance drop from \textit{CNN$^\dagger$} to CNN, where the
latter drops the unrealistic injectivity assumption).

\begin{table}[hpbt]
  \centering{
    {\caption{Linear Disentanglement (LD) scores on 2D shapes test set under \textit{unknown} fully sparse
          perturbations. All results are averaged over 3 seeds except those requiring to train SA-MESH from scratch
          that were trained only once. SA-LR achieves a score of 1.0 in all settings.}
        \label{tab:linear_disentanglement_scores_2D_shapes_continued}}%
      {
        \begin{tabular}{c ccc ccc}
          \hline
          \multicolumn{1}{c}{}
                                              & \multicolumn{3}{c}{$\text{pos}_x,\text{pos}_y,\text{color}$}
                                              & \multicolumn{3}{c}{$\text{pos}_x,\text{pos}_y,\text{shape}$}                                                                                                                                                                                                       \\ \cmidrule(lr){2-4} \cmidrule(lr){5-7}
          \multicolumn{1}{c}{Model}           & \multicolumn{1}{c}{$n=2$}                                    & \multicolumn{1}{c}{$n=3$}                    & $n=4$                    & \multicolumn{1}{c}{$n=2$}                     & \multicolumn{1}{c}{$n=3$}                     & $n=4$                     \\ \hline
          \multicolumn{1}{c}{Ours}            & \multicolumn{1}{c}{\textbf{1.00} $\pm 0.01$}                 & \multicolumn{1}{c}{\textbf{0.98} $\pm 0.01$} & \textbf{0.99} $\pm 0.00$ & \multicolumn{1}{c|}{\textbf{1.00} $\pm 0.01$} & \multicolumn{1}{c|}{\textbf{0.98} $\pm 0.01$} & \textbf{0.99}  $\pm 0.01$ \\
          \multicolumn{1}{c}{SA-RP}           & \multicolumn{1}{c}{0.77}                                     & \multicolumn{1}{c}{0.61}                     & 0.60                     & \multicolumn{1}{c}{0.71}                      & \multicolumn{1}{c}{0.68}                      & 0.70                      \\
          \multicolumn{1}{c}{SA-PC}           & \multicolumn{1}{c}{0.97}                                     & \multicolumn{1}{c}{0.98}                     & 0.99                     & \multicolumn{1}{c}{0.80}                      & \multicolumn{1}{c}{0.66}                      & 0.87                      \\
          \multicolumn{1}{c}{CNN$^{\dagger}$} & \multicolumn{1}{c}{1.00 $\pm 0.00$}                          & \multicolumn{1}{c}{0.99 $\pm 0.01$}          & 0.98  $\pm 0.00$         & \multicolumn{1}{c}{1.00 $\pm 0.00$}           & \multicolumn{1}{c}{1.00 $\pm 0.00$}           & 0.99 $\pm 0.00$           \\
          \multicolumn{1}{c}{CNN}             & \multicolumn{1}{c}{0.35 $\pm 0.00$}                          & \multicolumn{1}{c}{0.15 $\pm 0.00$}          & 0.07 $\pm 0.01$          & \multicolumn{1}{c}{0.32 $\pm 0.01$}           & \multicolumn{1}{c}{0.15 $\pm 0.01$}           & 0.11 $\pm 0.01$           \\ \hline
        \end{tabular}

        \paragraph{}
        \paragraph{}

        \begin{tabular}{c ccc}
          \hline
          \multicolumn{1}{c}{}
                                              & \multicolumn{3}{c}{$\text{pos}_x,\text{pos}_y,\text{color},\text{shape}$}                                                                           \\ \hline
          \multicolumn{1}{c}{Model}           & \multicolumn{1}{c}{$n=2$}                                                 & \multicolumn{1}{c}{$n=3$}                    & $n=4$                    \\ \hline
          \multicolumn{1}{c}{Ours}            & \multicolumn{1}{c}{\textbf{0.99} $\pm 0.00$}                              & \multicolumn{1}{c}{\textbf{0.98} $\pm 0.01$} & \textbf{0.99} $\pm 0.00$ \\
          \multicolumn{1}{c}{SA-RP}           & \multicolumn{1}{c}{0.69}                                                  & \multicolumn{1}{c}{0.73}                     & 0.60                     \\
          \multicolumn{1}{c}{SA-PC}           & \multicolumn{1}{c}{0.74}                                                  & \multicolumn{1}{c}{0.75}                     & 0.52                     \\
          \multicolumn{1}{c}{CNN$^{\dagger}$} & \multicolumn{1}{c}{1.00 $\pm 0.00$}                                       & \multicolumn{1}{c}{0.99 $\pm 0.01$}          & 1.00 $\pm 0.00$          \\
          \multicolumn{1}{c}{CNN}             & \multicolumn{1}{c}{0.40 $\pm 0.00$}                                       & \multicolumn{1}{c}{0.21 $\pm 0.00$}          & 0.11 $\pm 0.00$          \\ \hline
        \end{tabular}
      }
  }
\end{table}

\begin{table}[hpbt]
  \centering{
    {\caption{Permutation Disentanglement (MCC) scores on 2D shapes test set under \textit{unknown} fully
          sparse perturbations. All results are averaged over 3 seeds except those requiring to train SA-MESH from
          scratch that were trained only once. SA-LR achieves a score of 1.0 in all settings.}
        \label{tab:mcc_scores_2D_shapes_continued}}%
      {
        \begin{tabular}{c ccc ccc}
          \hline
          \multicolumn{1}{c}{}
                                              & \multicolumn{3}{c}{$\text{pos}_x,\text{pos}_y,\text{color}$}
                                              & \multicolumn{3}{c}{$\text{pos}_x,\text{pos}_y,\text{shape}$}                                                                                                                                                                                                       \\ \cmidrule(lr){2-4} \cmidrule(lr){5-7}
          \multicolumn{1}{c}{Model}           & \multicolumn{1}{c}{$n=2$}                                    & \multicolumn{1}{c}{$n=3$}                    & $n=4$                    & \multicolumn{1}{c}{$n=2$}                     & \multicolumn{1}{c}{$n=3$}                     & $n=4$                     \\ \hline
          \multicolumn{1}{c}{Ours}            & \multicolumn{1}{c}{\textbf{1.00} $\pm 0.01$}                 & \multicolumn{1}{c}{\textbf{0.95} $\pm 0.05$} & \textbf{0.97} $\pm 0.02$ & \multicolumn{1}{c|}{\textbf{0.99} $\pm 0.01$} & \multicolumn{1}{c|}{\textbf{0.99} $\pm 0.01$} & \textbf{0.99}  $\pm 0.01$ \\
          \multicolumn{1}{c}{SA-RP}           & \multicolumn{1}{c}{0.74}                                     & \multicolumn{1}{c}{0.60}                     & 0.60                     & \multicolumn{1}{c}{0.66}                      & \multicolumn{1}{c}{0.63}                      & 0.59                      \\
          \multicolumn{1}{c}{SA-PC}           & \multicolumn{1}{c}{0.87}                                     & \multicolumn{1}{c}{0.89}                     & 0.90                     & \multicolumn{1}{c}{0.83}                      & \multicolumn{1}{c}{0.81}                      & 0.89                      \\
          \multicolumn{1}{c}{CNN$^{\dagger}$} & \multicolumn{1}{c}{1.00 $\pm 0.00$}                          & \multicolumn{1}{c}{0.99 $\pm 0.01$}          & 0.99  $\pm 0.01$         & \multicolumn{1}{c}{1.00 $\pm 0.00$}           & \multicolumn{1}{c}{1.00 $\pm 0.00$}           & 0.99 $\pm 0.00$           \\
          \multicolumn{1}{c}{CNN}             & \multicolumn{1}{c}{0.55 $\pm 0.01$}                          & \multicolumn{1}{c}{0.35 $\pm 0.01$}          & 0.24 $\pm 0.01$          & \multicolumn{1}{c}{0.52 $\pm 0.02$}           & \multicolumn{1}{c}{0.33 $\pm 0.02$}           & 0.28 $\pm 0.02$           \\ \hline
        \end{tabular}

        \paragraph{}
        \paragraph{}

        \begin{tabular}{c ccc}
          \hline
          \multicolumn{1}{c}{}
                                              & \multicolumn{3}{c}{$\text{pos}_x,\text{pos}_y,\text{color},\text{shape}$}                                                                           \\ \hline
          \multicolumn{1}{c}{Model}           & \multicolumn{1}{c}{$n=2$}                                                 & \multicolumn{1}{c}{$n=3$}                    & $n=4$                    \\ \hline
          \multicolumn{1}{c}{Ours}            & \multicolumn{1}{c}{\textbf{0.99} $\pm 0.01$}                              & \multicolumn{1}{c}{\textbf{0.98} $\pm 0.01$} & \textbf{0.99} $\pm 0.01$ \\
          \multicolumn{1}{c}{SA-RP}           & \multicolumn{1}{c}{0.54}                                                  & \multicolumn{1}{c}{0.68}                     & 0.55                     \\
          \multicolumn{1}{c}{SA-PC}           & \multicolumn{1}{c}{0.64}                                                  & \multicolumn{1}{c}{0.63}                     & 0.57                     \\
          \multicolumn{1}{c}{CNN$^{\dagger}$} & \multicolumn{1}{c}{1.00 $\pm 0.00$}                                       & \multicolumn{1}{c}{0.99 $\pm 0.01$}          & 1.00 $\pm 0.00$          \\
          \multicolumn{1}{c}{CNN}             & \multicolumn{1}{c}{0.61 $\pm 0.00$}                                       & \multicolumn{1}{c}{0.43 $\pm 0.00$}          & 0.30 $\pm 0.00$          \\ \hline
        \end{tabular}
      }
  }
\end{table}

\subsection{3D Shapes}
\label{appendix:results_continued_3D}
\paragraph{Quantitative Results.}
Tables \ref{tab:linear_disentanglement_scores_3D_shapes_continued},
\ref{tab:mcc_scores_3D_shapes_continued} extend our results under unknown fully sparse perturbations
on the 3D shapes dataset to more combinations of disentanglement target properties. Again, we can observe
the applicability of our method to this more complex 3D dataset that contains artifacts related to depth,
occlusion, and lighting, to name a few. Again our results stay very close to the upper bound on the
achievable performance which uses a \textit{supervised} linear regression from slot projections $\hat{z}$
to \textit{ground truth} latents $z$.

\begin{table}[hpbt]
  \centering{
    {\caption{Linear Disentanglement (LD) scores on 3D shapes test set under \textit{unknown} fully sparse
          perturbations. All results are averaged over 3 seeds except those requiring to train SA-MESH from scratch
          that were trained only once. SA-LR achieves a score of 1.0 in all settings.}
        \label{tab:linear_disentanglement_scores_3D_shapes_continued}}%
      {
        \begin{tabular}{c ccc ccc}
          \hline
          \multicolumn{1}{c}{}
                                    & \multicolumn{3}{c}{$\text{pos}_x,\text{pos}_y,\text{size}$}
                                    & \multicolumn{3}{c}{$\text{pos}_x,\text{pos}_y,\text{color},\text{rotation}$}                                                                                                                                                                                                     \\ \cmidrule(lr){2-4} \cmidrule(lr){5-7}
          \multicolumn{1}{c}{Model} & \multicolumn{1}{c}{$n=2$}                                                    & \multicolumn{1}{c}{$n=3$}                    & $n=4$                    & \multicolumn{1}{c}{$n=2$}                    & \multicolumn{1}{c}{$n=3$}                    & $n=4$                     \\ \hline
          \multicolumn{1}{c}{Ours}  & \multicolumn{1}{c}{\textbf{0.98} $\pm 0.01$}                                 & \multicolumn{1}{c}{\textbf{0.98} $\pm 0.01$} & \textbf{0.98} $\pm 0.00$ & \multicolumn{1}{c}{\textbf{0.98} $\pm 0.00$} & \multicolumn{1}{c}{\textbf{0.97} $\pm 0.01$} & \textbf{0.98}  $\pm 0.01$ \\
          \multicolumn{1}{c}{SA-RP} & \multicolumn{1}{c}{0.61}                                                     & \multicolumn{1}{c}{0.62}                     & 0.53                     & \multicolumn{1}{c}{0.59}                     & \multicolumn{1}{c}{0.54}                     & 0.55                      \\
          \multicolumn{1}{c}{SA-PC} & \multicolumn{1}{c}{0.78}                                                     & \multicolumn{1}{c}{0.84}                     & 0.78                     & \multicolumn{1}{c}{0.70}                     & \multicolumn{1}{c}{0.72}                     & 0.69                      \\ \hline
        \end{tabular}
      }
  }
\end{table}

\begin{table}[hpbt]
  \centering{
    {\caption{Permutation Disentanglement (MCC) scores on 3D shapes test set under \textit{unknown} fully
          sparse perturbations. All results are averaged over 3 seeds except those requiring to train SA-MESH from
          scratch that were trained only once. SA-LR achieves a score of 1.0 in all settings.}
        \label{tab:mcc_scores_3D_shapes_continued}}%
      {
        \begin{tabular}{c ccc ccc}
          \hline
          \multicolumn{1}{c}{}
                                    & \multicolumn{3}{c}{$\text{pos}_x,\text{pos}_y,\text{size}$}
                                    & \multicolumn{3}{c}{$\text{pos}_x,\text{pos}_y,\text{color},\text{rotation}$}                                                                                                                                                                                                     \\ \cmidrule(lr){2-4} \cmidrule(lr){5-7}
          \multicolumn{1}{c}{Model} & \multicolumn{1}{c}{$n=2$}                                                    & \multicolumn{1}{c}{$n=3$}                    & $n=4$                    & \multicolumn{1}{c}{$n=2$}                    & \multicolumn{1}{c}{$n=3$}                    & $n=4$                     \\ \hline
          \multicolumn{1}{c}{Ours}  & \multicolumn{1}{c}{\textbf{0.96} $\pm 0.02$}                                 & \multicolumn{1}{c}{\textbf{0.96} $\pm 0.05$} & \textbf{0.96} $\pm 0.03$ & \multicolumn{1}{c}{\textbf{0.98} $\pm 0.01$} & \multicolumn{1}{c}{\textbf{0.98} $\pm 0.02$} & \textbf{0.97}  $\pm 0.01$ \\
          \multicolumn{1}{c}{SA-RP} & \multicolumn{1}{c}{0.60}                                                     & \multicolumn{1}{c}{0.57}                     & 0.52                     & \multicolumn{1}{c}{0.55}                     & \multicolumn{1}{c}{0.51}                     & 0.49                      \\
          \multicolumn{1}{c}{SA-PC} & \multicolumn{1}{c}{0.87}                                                     & \multicolumn{1}{c}{0.90}                     & 0.86                     & \multicolumn{1}{c}{0.73}                     & \multicolumn{1}{c}{0.76}                     & 0.74                      \\ \hline
        \end{tabular}
      }
  }
\end{table}

\paragraph{Qualitative Results.}
Figures \ref{fig:config_1}-\ref{fig:config_3} illustrate the learned disentangled (object-centric)
representations. Each figure shows a sequence of 3D samples evolving over 5 steps (shown on the left),
and how the learned representations respond to the perturbations (shown on the right). Perturbations
include changing the object's $\text{pos}_x, \text{pos}_y, \text{color}, \phi$ (rotation). The model
used here is trained with 3 slots, and the learned representations are the result of a projection layer
learned via our weakly-supervised method applied to $64-$dimensional object-centric representations.
The projection maps each slot from $\R^{64}\to\R^4$, i.e., the disentanglement target space. In figures
\ref{fig:config_1}-\ref{fig:config_3}, the 4 dimensions of such projections for object slots are presented
over 5 steps, i.e., each set of colored lines shows the evolution of the projection of a slot
corresponding to the object with the same color. Please refer to the figures for details of the
perturbations.
Lastly, we have kept the number of objects in these scenes to two for clarity of the presentation, however
tables \ref{tab:linear_disentanglement_scores_3D_shapes},\ref{tab:mcc_scores_3D_shapes},
\ref{tab:linear_disentanglement_scores_3D_shapes_continued},\ref{tab:mcc_scores_3D_shapes_continued} show
that we achieve similar performances with other sets of properties and number of objects in the scene.

\begin{figure}[h]
  \centering
  \includegraphics[width=0.13\textwidth]{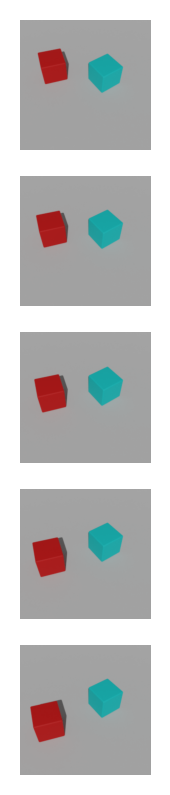}
  \includegraphics[width=0.6\textwidth]{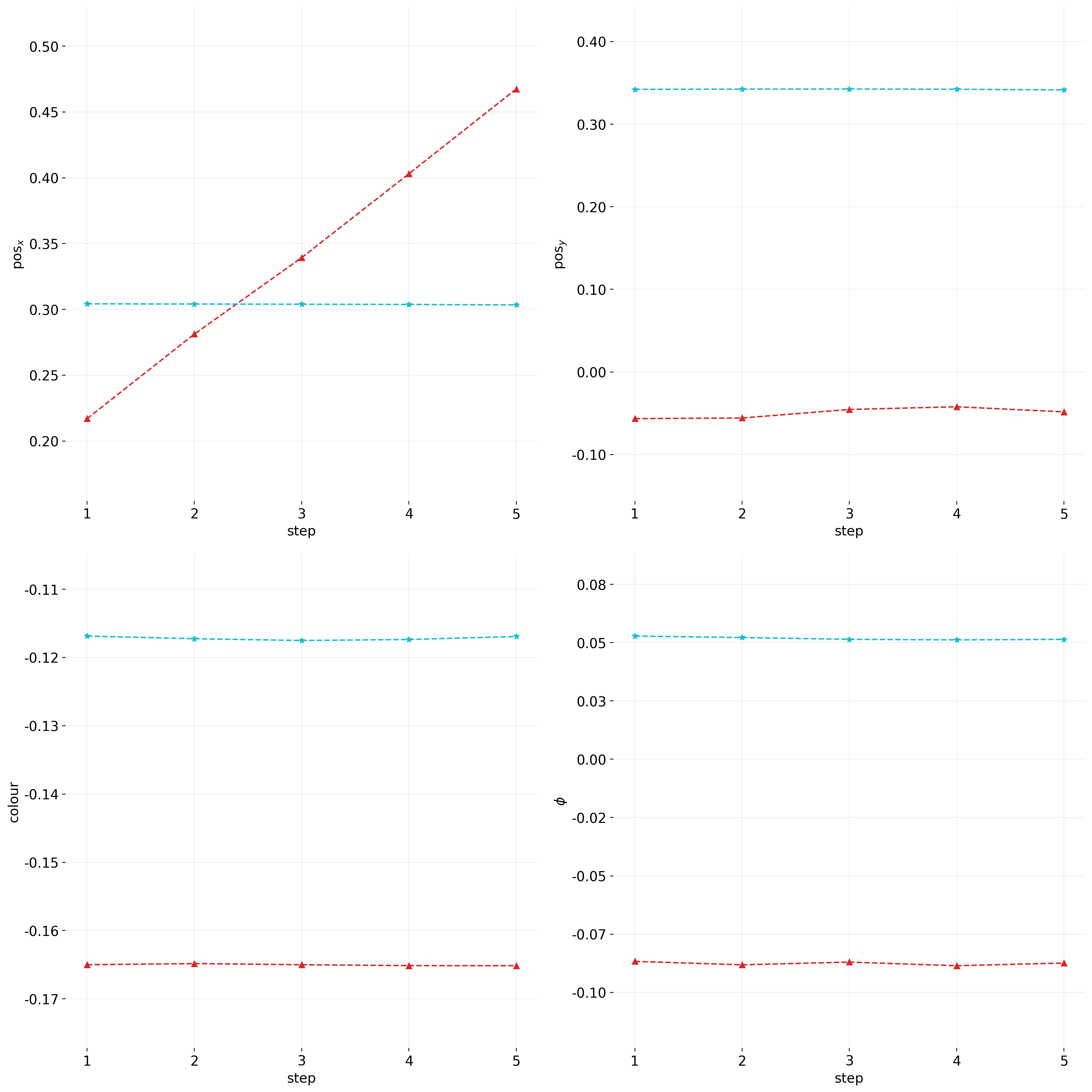}
  \caption{\emph{(Left)} From top to bottom, each steps perturbs the $\text{pos}_x$ coordinate of the red
    object by $0.2$ in the \emph{ground-truth latent space}, \emph{(Right)} which is reflected (through an
    affine transform) as a linear increase in $\text{pos}_x$, while the rest of the properties for both
    objects remain the same, demonstrating that the learned representations are indeed disentangled. In
    generating these samples, camera has some non-zero angle w.r.t. the origin, therefore, perturbations
    in the $x$ direction appear as vertical displacements.}
  \label{fig:config_1}
\end{figure}

\begin{figure}[h]
  \centering
  \includegraphics[width=0.13\textwidth]{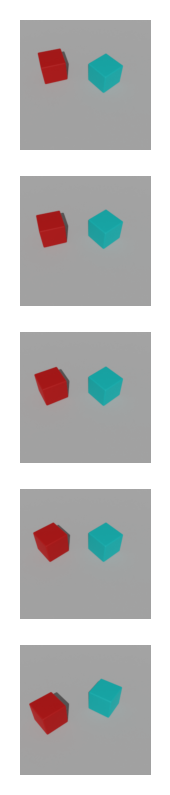}
  \includegraphics[width=0.6\textwidth]{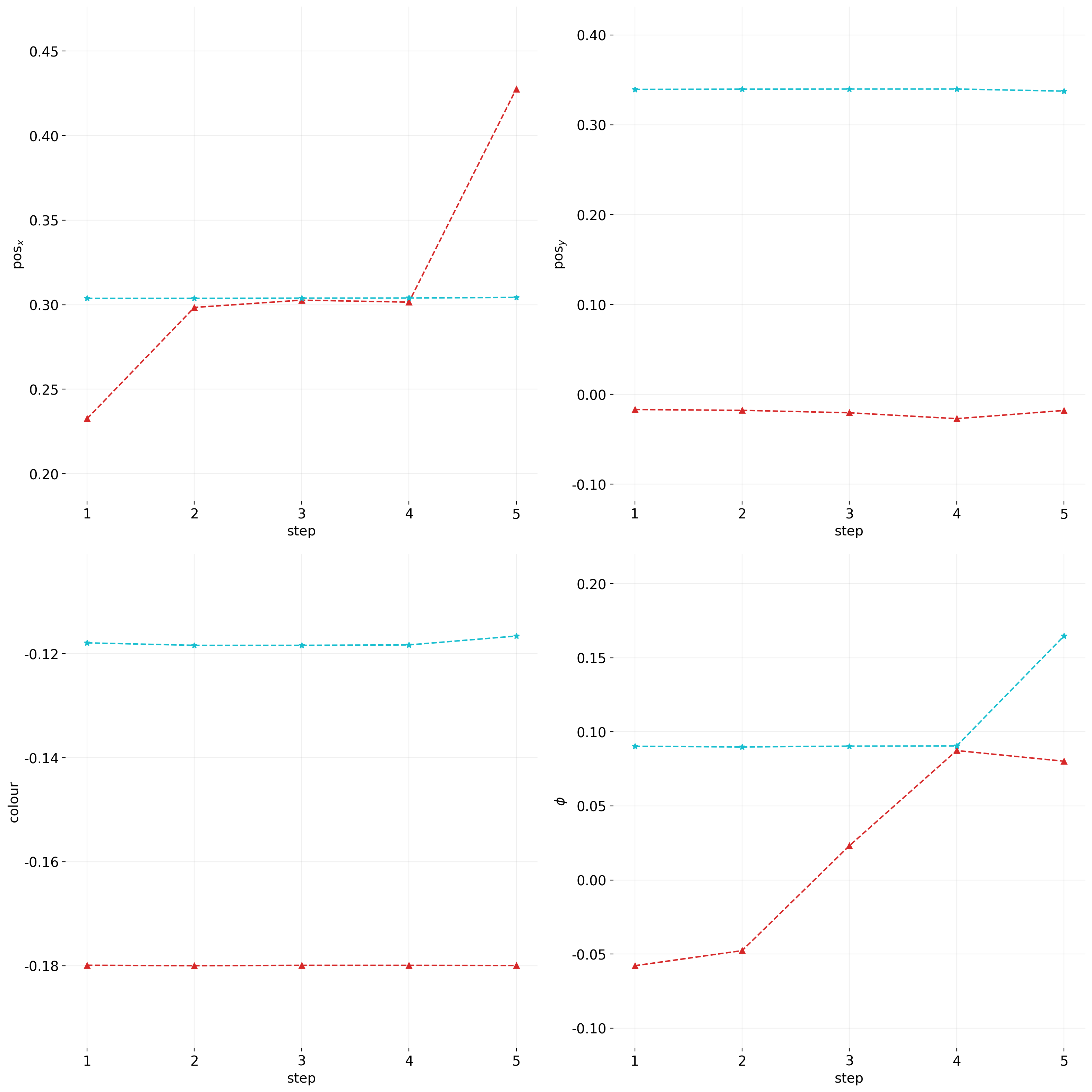}
  \caption{\emph{(Left)} From top to bottom, the objects are perturbed as follows; Red: Downward
    perturbation so it sits at the same $x$ coordinate as the Blue cube (1-2), Rotates counterclockwise
    along the $z-$axis to be at the same orientation as the Blue cube (2-4), Further downward perturbation
    (2 times the displacement from step 1 to 2). Blue: Rotates counterclockwise along the $z-$axis (4-5).
    \emph{(Right)} Note how the learned representations mapping correctly reflects the similar positions and
    rotations in the ground-truth, i.e., both by having properties of objects coincide at the same value, and
    by preserving the ratio of perturbations.}
  \label{fig:config_2}
\end{figure}

\begin{figure}[h]
  \centering
  \includegraphics[width=0.13\textwidth]{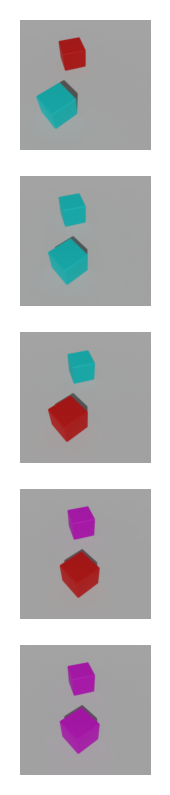}
  \includegraphics[width=0.6\textwidth]{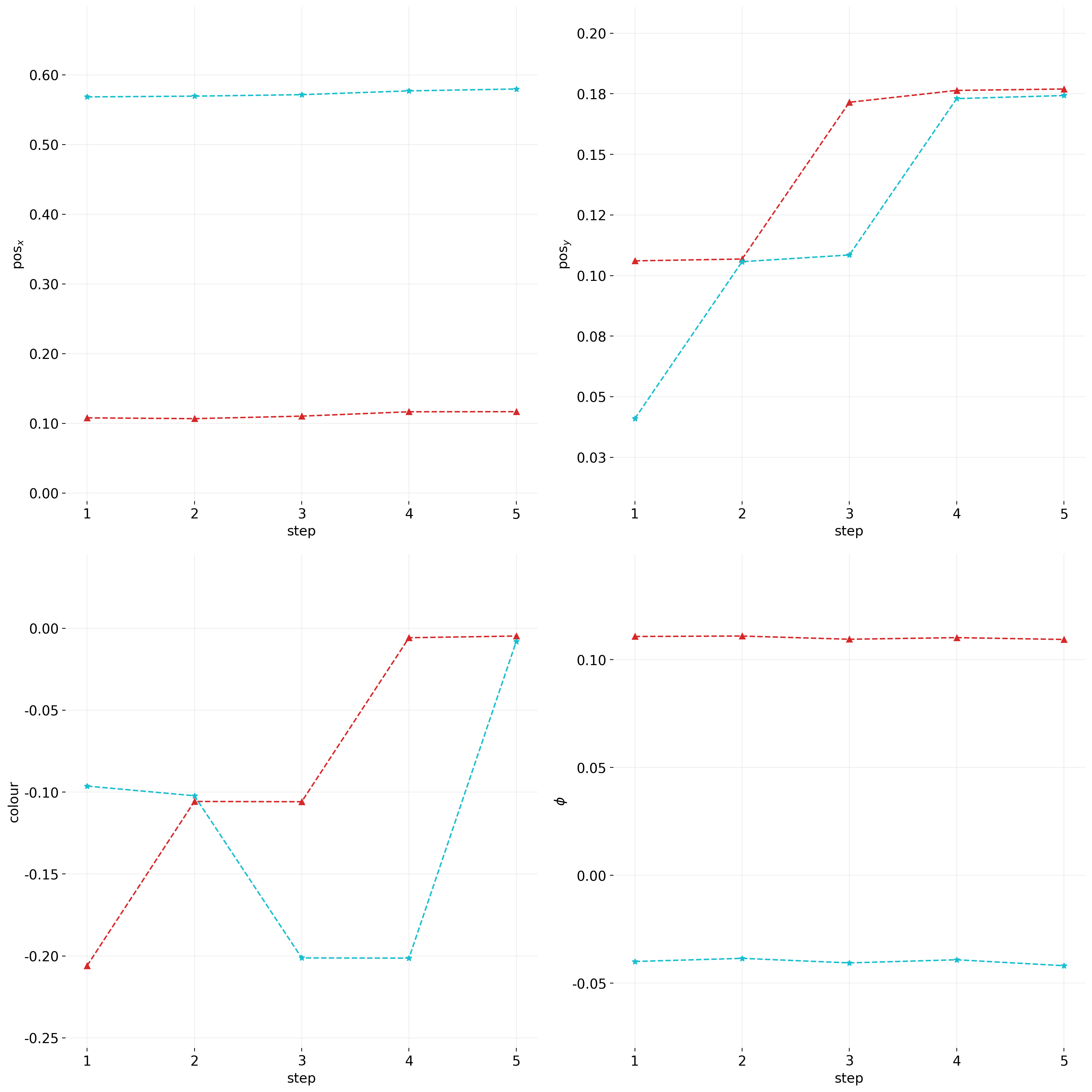}
  \caption{\emph{(Left)} From top to bottom, the objects are perturbed as follows; Since the objects
    change color, let us call the red cube in the top frame to be object 1, and the blue one to be object 2.
    Object 1: The color is perturbed to be equal to object 2 (1-2), Moves to the right as its
    $\text{pos}_y$ is perturbed by 0.2 (2-3), The color hue is once again perturbed to become purple (3-4).
    Object 2: Moves toward right (perturbation in $\text{pos}_y$  by 0.2) to be at the same $y$ coordinate
    as object 1 (1-2), The color hue is \emph{decreased} so now the colors of objects 1,2 are swapped (2-3).
    Moves by 0.2 in the $y$ direction to align with the other object once again (3-4), Change its color hue
    twice the previous color perturbation with the opposite sign to match the color of the other object
    (4-5). \emph{(Right)} Red curves correspond to object 1, and the blue curves correspond to object 2.
    Again, notice the sections where the curves coincide, as well as the ratio of jumps in the properties,
    showing consistency of the learned representations with the ground-truth causal representation that
    gives rise to these observations.}
  \label{fig:config_3}
\end{figure}

\subsection{Comparison of Sample Efficiency}
\label{appendix:sample_efficiency_advantage}
Figure \ref{fig:sample_efficiency} demonstrates the sample efficiency of our object-centric model compared
to a ResNet that achieves disentanglement with an injective DGP. Both models are trained with varying
number of training samples that contain $k=4$ objects for which
$\text{pos}_x,\text{pos}_y,\text{color},\text{shape}$ are the disentanglement target properties.
Since we sample 1-sparse perturbations uniformly, the training dataset size could be thought of as a
proxy for the number of different perturbations a given configuration of objects would encounter.
Although according to theory, the injective ResNet should require \textit{at least} $k$ times more
perturbations to identify the latents up to affine transformations, we observe that the advantage of our
object-centric model in terms of sample efficiency is much more pronounced in practice. Our method can
achieve close to perfect disentanglement with as few as 100 training samples, while an injective ResNet
takes 100 times more samples to raise to a comparable performance. This highlights the practical importance
of exploiting the inherent set structure of objects in a scene for representation learning.

\begin{figure}[h]
  \centering
  \includegraphics[width=0.8\textwidth]{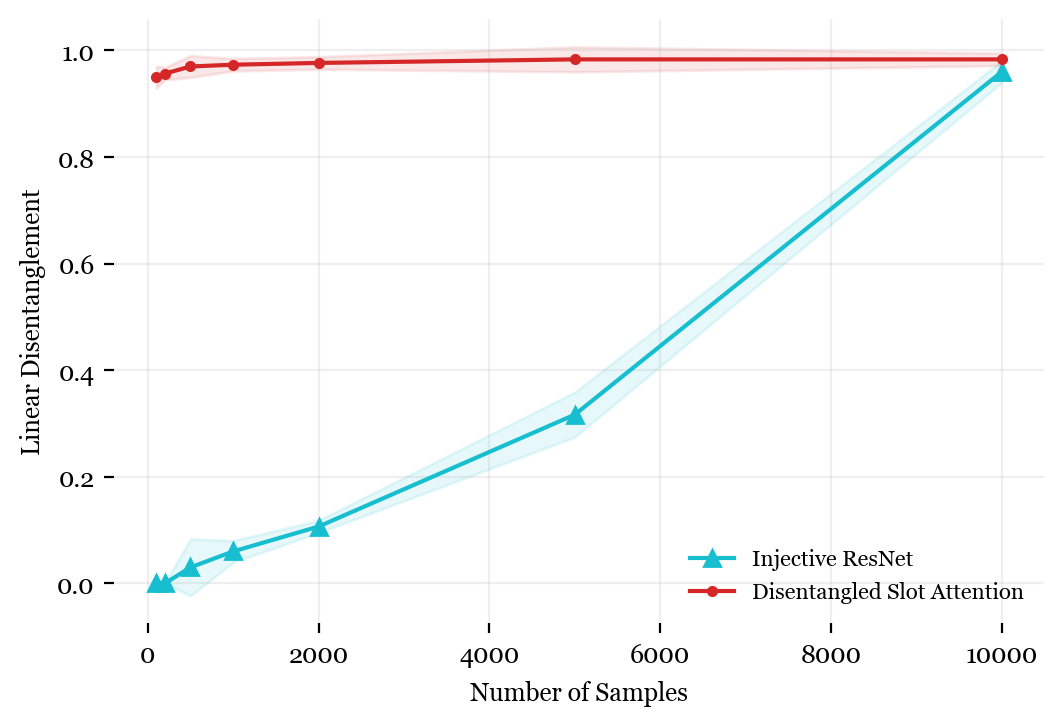}
  \caption{Comparing the disentanglement performance of an injective ResNet vs. our object-centric method
    based on the number of training samples. The dataset contains four 2D objects in which
    $\text{pos}_x,\text{pos}_y,\text{color},\text{shape}$ can vary. 
  }
  \label{fig:sample_efficiency}
\end{figure}

\section{Implementation and Experimental Details}

\subsection{SA-MESH Architecture}
For the slot attention architecture, we closely follow \citet{Locatello2020}, in particular, we use
the same CNN encoder and decoder as they use for CLEVR, except for the initial resolution of the spatial
broadcast decoder with 3D shapes where we use $4\times4$ since we are dealing with $64\times64$ images.
We use a slot size of 64 and always use $n+1$ number of slots, where $n$ is the number of objects in the
scene. We use 3 iterations for the recurrent updates in SA-MESH. For details concerning SA-MESH we
follow \citet{zhang2023unlocking}. Additionally, we also truncate the backpropagation through slot updates
as suggested by \citet{chang2022object} to improve training stability.
\subsection{Disentanglement Heads}
SA-MESH outputs $n+1$ slots that are of size 64, yet we need to project each of these slots to
a $d-$dimensional space so we can leverage the disentanglement method from
\citet{ahuja2022properties,ahuja2022weak}. We can simply achieve this projection by a single MLP,
however, we decided to allocate more parameters for this projection and use $d$ separate projection
heads mapping 64-dimensional vectors to $d$ separate scalars. This way identification of different
properties will not affect one another due to model capacity constraints. We stack the layers shown
in table \ref{tab:mlp_disentanglement_head} to obtain a projection head per each property. The same set
of $d$ projections will be shared among all slots.

\begin{table}[hpbt]
  \centering{
    {\caption{Layers in a projection head for disentanglement.}
        \label{tab:mlp_disentanglement_head}}%
      {
        \begin{tabular}{ccccc}
          \hline
          Layer      & Input Size & Output Size & Bias  & Activation \\ \hline
          Linear (1) & 64         & 32          & True  & ReLU       \\
          Linear (2) & 32         & 32          & True  & ReLU       \\
          Linear (3) & 32         & 16          & False & ReLU       \\
          Linear (4) & 16         & 1           & False & ReLU       \\ \hline
        \end{tabular}
      }}
\end{table}

\subsection{ConvNet Baseline} As a baseline for injective scenarios, we use a ResNet18
\citep{he2016deep} with an output width of 128 that is passed through \texttt{LeakyReLU}
activation, which is then followed by $d$ linear projection heads (for the same reason we use
separate disentanglement heads) that map the 128-dimensional output of the CNN encoder to $d$
separate 1-dimensional scalars that should correspond to the target $d-$dimensional space.
\subsection{Training}
For each $k$ and for each set of disentanglement target properties, we first train SA-MESH for 2000
epochs with a batch size of 64 for 2D shapes (as the images are $128\times128$), and a batch size of 128 for 3D shapes
(since images are $64\times64$) on a single A100 GPU with 40GB of memory. We used a fixed schedule for
the learning rate  at $2\times10^{-4}$, and we used AdamW \citep{loshchilov2017decoupled} with a weight decay of 0.01 along with
$\epsilon=10^{-8}, \beta_1=0.9, \beta_2=0.999$. SA-MESH was first solely trained by minimizing
reconstruction error on the training set, then its disentanglement performance was reported on the test
set for projection-based baselines (RP, PC, LR). Due to the high number of combinations of target
disentanglement properties and $k$, we just trained SA-MESH for each configuration only once.\\
Unsupervised disentanglement with our method has an additional stage which takes the aforementioned
pre-trained SA-MESH models and \textit{jointly} minimizes the loss in equation \ref{eq:total_loss}.
Note that at this stage, the SA-MESH model is \textit{not} frozen, so the gradients flow through its
network as well and help adjust the slot representations with the signal from the latent loss. Under
\textit{known perturbations}, we use the actual perturbations from the DGP as $\delta_t$ to guide the model via
equation \ref{eq:total_loss}, however, under \textit{unknown perturbations} setting, we replace all
perturbations $\delta_t$ by a hyperparameter $C$ (see section \ref{sec:method}).\\
CNN baselines were trained similar to SA-MESH but for much shorter, i.e., 200 epochs, and usually converge
very fast in less than 50 epochs.
\subsection{Hyperparameter Optimization}
We started around the hyperparameters used by \citet{Locatello2020} and \citet{zhang2023unlocking} where
applicable, and tuned on small subsets of the 2D shapes training data based on linear and permutation
disentanglement metrics. We considered 5 values for the learning rate $[2\times10^{-3},2\times10^{-4},10^{-4},6\times10^{-5},2\times10^{-5}]$. Larger
batch sizes were always better and we were only constrained by memory in the case of $128\times128$ images
of the 2D shapes dataset. We considered 2 values $[0.1,0.5]$ for $\vert C\vert$, the fixed value representing
all unknown perturbations, and found $\vert C\vert=0.1$ to perform better.
We also considered slot sizes $[64, 128]$ on a small subset of the 2D shapes training dataset. Lastly we
considered 9 combinations for the relative importance of latent loss and reconstruction loss when training
the disentanglement heads, i.e., we considered all combinations of
$w_{\text{latent}}\in\{1, 10, 100\}$, $w_{\text{recons}}\in\{1, 10, 100\}$, and found the combination
of $w_{\text{recons}}=100, w_{\text{latent}}=10$ to strike the optimal balance between maintaining good
reconstructions and allowing the slot representations to give rise to disentangled projections.
\subsection{Datasets}
\label{appendix:datasets}
\paragraph{2D Shapes.} We use \texttt{pygame} engine \citep{pygame} for generating multi-object 2D scenes. Object
properties in both datasets include $p_x,p_y$, color, shape, size, and rotation angle. Based on object
properties, they are each rendered and placed on a white background (for the 2D dataset) or placed on a
floor that is illuminated by source lights and is being visited from somewhere above the floor (for the
3D dataset), and then aggregated to produce $x_t$. 
We discretized the range of color hues in order to test the model's ability to obtain disentangled representations in the simultaneous presence of
both continuous (position, size, and rotation angle) and discrete properties (color and shape).

In the 2D dataset:
\begin{itemize}
  \item $p_x,p_y$ are generated randomly and uniformly in the $[0,1]$ range, i.e.,
        the boundaries of the scene, such that no two objects overlap and no object falls even partially
        outside the boundaries. Positional coordinates can be perturbed by any value in $[-0.2, 0.2]$.
  \item For color, we use HSV color representations and fix saturation (S) and value (V) at 0.6 and
        choose hue (H) from a set of values predefined before training (for instance $[0.0, 0.25, 0.5, 0.75]$).
        We adopted this 1-$d$ representation to be consistent and have each property be represented by a scalar. Additionally we wanted to test the model's capacity when dealing with mixed discrete properties. Also, training Slot Attention or SA-MESH with discrete colors is computationally advantageous since the model will not have to deal with reconstructing all colors. However it should be noted that HSV is a cylindrical geometry with color hues being the angular dimension which results in values that have a distance of 1.0 being exactly the same color (given a fixed saturation and value). That is why a list of color hues such as $[0.0, 0.33, 0.66, 1.00]$ would not work since 0.0 and 1.0 are the same color, yet our model
        interprets the difference as a perturbation with the amount of 1.0, which is clearly wrong. A change
        of color from color $i$ to $j$ where $i,j$ index the list of color hues $H$ would be provided to
        the model as a perturbation in the amount of $(H[j]-H[i])/\vert H \vert$, where $\vert H \vert$
        denotes the number of colors in $H$.
  \item Shape is also clearly discrete and is selected at random uniformly from the following set of
        shapes $S=\{\text{circle}, \text{square}, \text{triangle}, \text{heart}, \text{diamond}\}$. Note
        however, that the effects of perturbations need to be visible in the pixel space, and we should be
        wary of the disentanglement target properties; for instance if the rotation angle $\phi_t$ is a
        property we aim to disentangle with perturbations, then we should exclude \textit{circle} from the
        set of possible shapes since a circle does not reflect in the pixel space the angle perturbations. A shape
        transformation from shape $i$ to $j$ where $i,j$ index $\mathcal{S}$, would be provided to the model
        as a perturbation with the amount of $(j-i)/\vert \mathcal{S} \vert$.
  \item Size is a continuous property in the range $[0.12, 0.24]$ of the height or width of the image
        which is 1. It can be perturbed by any amount in $[-0.02, 0.02]$.
  \item Rotation angle is also another continuous property in $[0, \pi/4]$. Similar to color hues,
        since this property is also angular, we have limited the range not to encounter situations that appear
        the same in the pixel space but have very different rotation angles (a square that is rotated $\pi/2$
        clockwise seems unaltered, or $\pi/4$ and $3\pi/4$ rotations both look the same for a square.). Angular
        perturbations can vary in the $[-0.2, 0.2]$ range.
\end{itemize}
We generate samples in pairs corresponding to $t,t+1$. For fully dense perturbations, we generate $k$
vectors of dimension $d$, where $k$ is the number of objects. We repeat the generation until the conditions
of non-overlapping objects and non-identifiability are met, i.e., no two objects at either $t$ or $t+1$
should overlap (before and after the perturbations), no object should fall in whole or partially out of
the scene, and no two objects should be perturbed by $d$-dimensional offsets that are closer than some
$\epsilon$. The last condition is necessary for fully dense perturbations as otherwise the matching has
no way of distinguishing which perturbation to assign to which object since the matching solely relies on
the \textit{difference} between $t,t+1$. For fully sparse perturbations, we are not constrained by the
latter, and we only need to choose perturbations that do not push the chosen object out of boundaries,
or make it overlap with another object. For any experiment we can have a subset of
$\{\text{pos}_x,\text{pos}_y,\text{color}, \text{shape}, \text{size}, \text{rotation angle}\}$ as the
properties we wish to disentangle by observing perturbations in the pixel space, and we call them
\textit{disentanglement target properties}. In the generation process, any non-target property will be
fixed for all objects in the whole dataset to avoid introducing unwanted variance to the disentanglement
of target properties, i.e., if we choose $\{\text{pos}_x,\text{pos}_y,\text{color}, \text{shape},
  \text{rotation angle}\}$ as target properties, then all the objects in all samples would have the same
fixed size. Lastly, we can choose to make a DGP injective or not. If we choose to make a DGP injective,
we would index the objects and choose a property to be set for objects according to the indices, i.e.,
we can choose to make the DGP injective by color; Suppose $k=4$ and the list of our color hues is
$[0.0, 0.25, 0.5, 0.75]$. We would color the objects, which are now ordered according to some index
set $\mathcal{I}$, according to $\mathcal{I}$. Non-target properties (excluding
the injectivity imposing property) will again be kept fixed for the whole dataset, and target properties are
generated according to fully dense or fully sparse perturbation schemes. The perturbations to all
properties are \textit{signed}, and this is especially crucial for discrete properties such as shape.
The reason is that disentanglement is achieved through observing \textit{relative} distances in the
pixel space, and having only positive or only negative perturbations deprives the model of having a
reference for each property.\\
For training we generate 1000 pair per target property such that the model on average sees at least
500 samples for either positive or negative perturbations to each property, i.e., if we choose
$\{\text{pos}_x,\text{pos}_y,\text{color}\}$ as target properties, we will generate 3000 samples for
training. The validation and test sets always have 1000 samples. For the 2D dataset, we generate
$128\times128$ images for better visual quality that is not distorted due to artifacts caused by
perturbations. We then normalize and clip the image features (RGB values) to be in $[-1,1]$ range.

\paragraph{3D Shapes.} For generating the 3D datasets we leverage \texttt{kubric} library
\citep{greff2021kubric} to obtain realistic scenes which we can highly customize. Objects sit on a
floor, a perspective camera is situated at $(2.5, 0, 3.0)$ and looks at $(0.0, 0.0, 0.0)$. Directional
light illuminates the scene from $(1.0, 0.0, 1.0)$ towards the center. The set of possible target
properties are similar to 2D shapes, and the range of properties in which each object is spawned is as
follows:
\begin{itemize}
  \item $\text{pos}_x,\text{pos}_y$ are generated randomly and uniformly in the $[-1.5,1.5]$
        and $[-1.0,1.0]$ ranges respectively, such that no two objects overlap and no object falls even
        partially outside the boundaries. Note however, by overlap we mean that objects are spawned such
        that they do not mutually fill a volume in the 3D space, and we only prevent such occurrences, but we
        \textit{do} allow \textit{occlusions} from the perspective of the camera, which adds to the
        complexity of this synthetic dataset. $\text{pos}_z$ is never a disentanglement target property
        and is always set such that objects sit on the floor (except when rotated). The reason for fixing
        the $z$ coordinate is that any possible perturbation to the 3D coordinates is always going to be
        interpreted on a 2D scene that is observed by a camera that is placed somewhere above the floor.
        Therefore, introducing a third coordinate in the DGP and target properties has no point. Positional
        coordinates can be perturbed by offsets in $[-0.3, 0.3]$ range.
  \item color is similarly parameterized by a scalar in HSV format as in 2D.
  \item Shape can be any of $\{\text{sphere}, \text{cube}, \text{cylinder}, \text{cone}\}$. Again,
        since the effects of perturbations need to be visible in the pixel space, we will not use
        spheres if the rotation angle $\phi$ is a property we aim to disentangle with perturbations,
        as sphere rotations do not reflect in the pixel space. For rotation in the 3D space we choose
        the $z$-axis as the axis of rotation so that angular perturbations are maximally visible (w.r.t.
        the perspective camera's location).
  \item Size is a continuous property in the range $[0.3, 0.7]$ and can be perturbed by offsets in $[-0.15, 0.15]$ range.
  \item Rotation angle follows the convention of 2D shapes DGP, except that the rotations are
        around the $z$-axis for better visual quality.
\end{itemize}
Since \texttt{kubric} already generates high fidelity images, we use $64\times64$ images to lower the computational burden of SA-MESH autoencoder. The number of
samples is always fixed at $20,000$ regardless of the target properties since the 3D dataset is more
complex. We use a similar transformation as in 2D, and normalize and clip the image features (RGB values)
to be in $[-1,1]$ range.

\end{document}